\documentclass[11pt,letterpaper]{article}

\usepackage[utf8]{inputenc} 
\usepackage[T1]{fontenc}    
\usepackage[margin=1in]{geometry} 
\usepackage{hyperref}       
\usepackage{url}            
\usepackage{xparse}         
\usepackage{booktabs}       
\usepackage{amsfonts}       
\usepackage{nicefrac}       
\usepackage{xcolor}         
\usepackage{setspace}       
\usepackage{amsmath}        
\usepackage{cleveref}       
\usepackage{graphicx}       
\usepackage{todonotes}      
\usepackage{amsmath}
\usepackage{amssymb}
\usepackage{mathtools}
\usepackage{amsthm}
\usepackage{algorithm}
\usepackage{algorithmic}
\usepackage{comment}
\theoremstyle{plain}
\newtheorem{theorem}{Theorem}
\newtheorem{lemma}[theorem]{Lemma}
\newtheorem{corollary}[theorem]{Corollary}
\newtheorem{proposition}[theorem]{Proposition}
\newenvironment{reptheorem}[2][]{%
  \begingroup
  \renewcommand{\thetheorem}{\ref{#2}}%
  \addtocounter{theorem}{-1}%
  \begin{theorem}[#1]%
}{%
  \end{theorem}%
  \endgroup
}
\theoremstyle{definition}
\newtheorem{definition}[theorem]{Definition}

\theoremstyle{remark}
\newtheorem{remark}[theorem]{Remark}

\newcommand{\E}{\mathbb{E}}
\renewcommand{\P}{\mathbb{P}}
\newcommand{\ind}{\mathbb{I}}
\newcommand{\ChiSq}{\chi^2}

\newcommand{\KL}{\mathrm{KL}}

\newcommand{\Reg}{\mathrm{Reg}}

\newcommand{\Ber}{\mathrm{Ber}}

\NewDocumentCommand{\citep}{O{} O{} m}{\cite{#3}}
\NewDocumentCommand{\citet}{O{} O{} m}{\cite{#3}}
\NewDocumentCommand{\citealp}{O{} O{} m}{\cite{#3}}

\title{Few Batches or Little Memory, But Not Both: Simultaneous Space and Adaptivity Constraints in Stochastic Bandits}
\date{}

\author{
Ruiyuan Huang$^{*}$ \\
School of Data Science, Fudan University \\
RuiyuanHuang00@gmail.com
\and
Zicheng Lyu$^{*}$ \\
School of Data Science, Fudan University \\
lyuzicheng@gmail.com
\and
Xiaoyi Zhu \\
School of Data Science, Fudan University \\
zhuxy22@m.fudan.edu.cn
\and
Zengfeng Huang$^{\dagger}$ \\
School of Data Science, Fudan University \\
Shanghai Innovation Institute \\
huangzf@fudan.edu.cn
}

\begin{document}

\maketitle

\let\thefootnote\relax
\footnotetext{$^{*}$Equal contribution.\quad $^{\dagger}$Corresponding author.}
\let\thefootnote\arabic

\begin{abstract}
We study stochastic multi-armed bandits under simultaneous constraints on space and adaptivity:
the learner interacts with the environment in $B$ batches and has only $W$ bits of persistent
memory.
Prior work shows that each constraint alone is surprisingly mild:
near-minimax regret $\widetilde{O}(\sqrt{KT})$ is achievable with $O(\log T)$ bits of memory under fully adaptive interaction, and with a $K$-independent $O(\log\log T)$-type number of batches when memory is unrestricted.
We show that this picture breaks down in the simultaneously constrained regime.
We prove that 
any algorithm with a $W$-bit memory constraint must use at least $\Omega(K/W)$ batches to achieve near-minimax regret $\widetilde{O}(\sqrt{KT})$, even under adaptive grids.
In particular, logarithmic memory rules out $O(K^{1-\varepsilon})$ batch complexity.

Our proof is based on an information bottleneck.
We show that near-minimax regret forces the learner to acquire $\Omega(K)$ bits of information about the hidden set of good arms under a suitable hard prior, whereas 
an algorithm with $B$ batches and $W$ bits of memory
allows only $O(BW)$ bits of information.
A key ingredient is a localized change-of-measure lemma that yields probability-level arm exploration guarantees, which is of independent interest.
We also give an algorithm that, for any bit budget $W$ with $\Omega(\log T) \le W \le O(K\log T)$, uses at most $W$ bits of memory and $\widetilde{O}(K/W)$ batches while achieving regret $\widetilde{O}(\sqrt{KT})$, nearly matching our lower bound up to polylogarithmic factors. 
\end{abstract}



\section{Introduction}

The stochastic multi-armed bandit (MAB) problem is a standard model for sequential decision making.
At each round $t\in[T]$, a learner chooses an arm $A_t\in[K]$ and observes a stochastic reward with mean $\mu_{A_t}$.
The goal is to minimize the expected pseudo-regret
\[
\mathbb{E}[\Reg_T]
:=
\mathbb{E}\!\left[\sum_{t=1}^T \bigl(\mu^\star-\mu_{A_t}\bigr)\right],
\qquad
\mu^\star := \max_{i\in[K]} \mu_i .
\]

Classical algorithms such as UCB \citep{auer2002finite} and Thompson sampling
\citep{thompson1933likelihood,agrawal2012analysis}
achieve the minimax rate $\widetilde{O}(\sqrt{KT})$.
These guarantees are typically established in an idealized model that implicitly grants two resources:
enough memory to maintain rich per-arm statistics, and full adaptivity, meaning that the learner may update its policy after every observation.

In many large-scale experimentation systems such as online advertising and recommendation systems, neither resource is always abundant.
When $K$ is large, storing per-arm state in fast memory can be costly.
At the same time, feedback is often processed asynchronously in system pipelines, so policy updates occur only periodically rather than after every interaction.

From a theoretical perspective, these two limitations correspond to constraints on two canonical computational resources: space and adaptivity.
A space constraint limits how many bits of memory a learner can use, while an adaptivity constraint limits how frequently a learner  may adapt its interaction with the environment based on past feedback.
Understanding how statistical performance depends on these resources, and whether one resource can compensate for the other, is therefore of fundamental interest.
In this paper, we study stochastic bandits under simultaneous constraints on both resources.

\paragraph{Space constraint.}
We model the space constraint by restricting the learner to $W$ bits of persistent memory, meaning that the state carried from one round to the next can take at most $2^W$ values.
Computation within a round is unrestricted; only the state retained between rounds is limited.
This is a natural model of space-constrained computation. In bandit theory, it dates back to the finite-memory bandit formulations of \citet{cover1968note,cover2003two}, and is also consistent with the modern sublinear-space viewpoint of \citet{liau2018constantspace}.
Under the standard word-RAM convention with word size $\Theta(\log T)$, $W=\Theta(\log T)$ corresponds to $O(1)$ machine words.

\paragraph{Adaptivity constraint.}
We consider the \emph{batched bandit} model~\citep{perchet2016batched,gao2019batched}, in which the horizon is partitioned into $B$ batches and outcomes are only incorporated between batches.
At the beginning of each batch, the learner must commit to the action-generation rule for that batch using only information available at the previous batch boundary.
Equivalently, feedback observed within a batch cannot affect later actions in the same batch.
Following \citet{gao2019batched}, we consider both the static-grid and the adaptive-grid variants. The static grid assumes that the batch boundaries are predetermined and fixed in advance, while the adaptive grid allows the learner to choose the next batch boundary using information available at the previous boundary.

Viewed separately, each constraint appears surprisingly mild.
Stochastic bandits admit near-optimal algorithms using only $O(\log T)$ bits of memory under fully adaptive interaction \citep{cover2003two,liau2018constantspace}, while batched bandits attain near-minimax regret with only $O(\log\log T)$ batches when memory is unrestricted \citep{perchet2016batched,gao2019batched,jin2021anytimebatched,esfandiari2021regret}.
These positive results rely on the \emph{other} resource being abundant:
space-bounded algorithms exploit frequent updates, whereas few-batch algorithms retain rich statistics across batch boundaries.

We consider the joint regime where the learner has a $W$-bit memory budget and interacts with the environment in $B$ batches.
Our results show that this regime is qualitatively different from either constraint in isolation:
once the space budget is logarithmic, near-minimax regret is no longer compatible with an $O(K^{1-\varepsilon})$ number of batches.

\begin{theorem}[Informal]
\label{thm:informal-lower}
For sufficiently large $T$, any $B$-batch stochastic bandit algorithm with $W$ bits of persistent memory and near-minimax regret $\widetilde{O}(\sqrt{KT})$ must satisfy
\[
B=\Omega(K/W),
\]
even if the batch grid is allowed to be adaptive.
\end{theorem}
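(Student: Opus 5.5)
The plan is to follow the information-bottleneck outline from the abstract. I will build a hard prior, show that low regret forces the learner to learn $\Omega(K)$ bits about a hidden set, and show that $B$ batches with $W$ bits of memory let through only $O(BW)$ bits.

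\textbf{Hard prior.} Let $S\subseteq[K]$ be a uniformly random subset in which each arm is included independently with probability $1/2$. Give arms in $S$ mean $\tfrac12+\Delta$ and arms outside $S$ mean $\tfrac12$, with $\Delta \approx c\sqrt{K/T}$ up to polylogarithmic factors. I would rather plant each arm independently than plant a single good arm, because then $H(S)=K$ bits and the information demand scales with $K$ rather than $\log K$.

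\textbf{Low regret forces $\Omega(K)$ bits.} With $\Delta\approx\sqrt{K/T}$, an arm whose membership is unknown must be pulled about $1/\Delta^2\approx T/K$ times before its type can be resolved. Near-minimax regret $\widetilde O(\sqrt{KT})$ should therefore force the following behavior: for a constant fraction of arms $i$, the learner's pull pattern must be correlated with $\ind\{i\in S\}$. Concretely, a good arm must be distinguished from a bad one with constant advantage at some point; otherwise the learner wastes roughly $T\Delta$ regret on a constant fraction of the mass. Turning this into a statement that holds \emph{with constant probability} is where the paper's localized change-of-measure lemma enters. For each $i$, I compare the law under $S$ with the law under $S\triangle\{i\}$, and use a KL bound localized to the pulls of arm $i$. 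This yields a probability-level guarantee: for most $i$, some function of the memory states at batch boundaries predicts $\ind\{i\in S\}$ with constant advantage. Fano's inequality (or a per-coordinate sum, $I(S;M)\ge\sum_i (1-h(p_i))$) then gives $I(S;\text{boundary states})=\Omega(K)$.

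\textbf{$O(BW)$-bit channel.} In the batched model, actions inside batch $b$ depend only on the state $M_{b-1}$ at the previous boundary (together with the grid choice, which is itself a function of $M_{b-1}$) and on internal randomness. Rewards within a batch influence the future only through the next $W$-bit state $M_b$. Applying the chain rule,
\[
I(S; M_1,\dots,M_B) \le \sum_{b} H(M_b)\le BW,
\]
and since every action in batch $b$ is a randomized function of $M_{b-1}$, the actions add no information about $S$ beyond the states. The subtle point is that memory may be updated within a batch while actions stay fixed. Only the state at each boundary matters, so the count still holds.

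\textbf{Combining and the main obstacle.} Putting the two bounds together gives $BW\ge \Omega(K)$. The hard step is the second one: converting an expected-regret bound into per-arm, constant-probability identification of membership, uniformly over a constant fraction of arms and compatible with adaptive grids. A crude global KL bound loses a factor of $K$. I would first try a per-arm swap argument with truncation at the expected pull count $N_i\lesssim \widetilde O(T/K)$, using Markov's inequality on $\E[N_i\Delta]$ to localize.
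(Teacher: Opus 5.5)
Your third step is fine, but there is a genuine gap in your second step, and it is exactly the point the paper's argument is built around. Every instance in your family has many co-optimal arms, and that includes each $S\triangle\{i\}$ you compare against. So regret on these instances never forces the learner to examine any particular arm. Concretely, consider this algorithm:
\begin{itemize}
\item in one batch, pull each arm $m\asymp\log T/\Delta^2$ times, keeping in memory only a running sum and the index of the first arm whose empirical mean exceeds $\tfrac12+\tfrac{\Delta}{2}$;
\item in a second batch, commit to that arm.
\end{itemize}
It uses $2$ batches and $O(\log T)$ bits, and has regret $O(K\log T/\Delta)=\widetilde O(\sqrt{KT})$ on every instance of your family. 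If instead $Km>T$, then $T\Delta\lesssim\sqrt{KT\log T}$, and even always pulling arm $1$ suffices.

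Hence your claim that near-minimax regret forces the pull pattern of a constant fraction of arms to correlate with $\ind\{i\in S\}$ cannot be derived from your family. A change of measure between $S$ and $S\triangle\{i\}$ only upper-bounds how differently the learner may behave; it never forces a difference.

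The missing idea is to use the uniform guarantee on instances \emph{outside} the prior's support. The paper raises a single good arm $j$ to $\tfrac12+\Delta_0$, with $\Delta_0\asymp C\log T\log K\sqrt{K/T}$, making it uniquely optimal. Regret then gives $\P_{\nu_S^{(j)}}(N_j(T)<n)\le\tfrac14$ for $n\asymp 1/\Delta_0^2$. The event $\{N_j(T)<n\}$ is measurable with respect to the first $n$ rewards of arm $j$ (plus everything else). So the prefix $\ChiSq$ change of measure costs only $e^{O(n\Delta_0^2)}=O(1)$, and it yields $\P_{\nu_S}(N_j(T)\ge n)\ge\beta$ under the base instance.

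The truncation must be at the threshold $n$, not at an expected pull count. Under the base instance, the good arm being exploited can have $\E[N_j(T)]=\Theta(T)$. So neither Markov on $\E[N_j]$ nor a full-transcript KL bound gives this.

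There is a second problem. You use the same scale $\Delta\approx\sqrt{K/T}$ both for the good/bad gap and for the resolution threshold, and this breaks false-positive control even once good arms are forced. A bad arm pulled $1/\Delta^2\approx T/K$ times costs only about $\sqrt{T/K}$ regret. So all bad arms can reach that scale at a total cost of about $\sqrt{KT}$, within budget, and thresholded pull counts need not separate good arms from bad ones.

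The membership gap must be much larger than the perturbation gap. The paper takes it constant (means $\tfrac12$ versus $0$). Then each bad arm reaching $n$ pulls costs $n/2$, and Markov gives $\E[\mathrm{FP}]\le 2\E[\Reg_T]/n\le\beta K/4$. Together with $\E[\mathrm{FN}]\le(1-\beta)K/2$ from the exploration lemma, $Y_i=\ind\{N_i(T)\ge n\}$ predicts $\ind\{i\in S^\star\}$ with average error at most $\tfrac12-\tfrac{\beta}{4}$. That is what the per-coordinate Fano bound needs.

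By policy commitment, $Y$ is a function of the seed and the boundary states, so your third step then goes through essentially as you wrote it. Using independent inclusion instead of a uniform $K/2$-subset is a harmless variation.
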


A formal version of Theorem~\ref{thm:informal-lower} is given in \Cref{sec:lower_bound}. 

At a high level, the lower bound comes from an information bottleneck at batch boundaries.
All information gathered within a batch can influence future batches only through the $W$-bit state retained at the batch boundary.
With frequent feedback, a learner can recycle a small amount of memory over time; with abundant memory, it can compensate for few updates by storing accurate per-arm summaries.
Under simultaneous constraints, neither mechanism is available.
The learner must either explore many arms and retain only a coarse summary, or keep precise information about only a small subset of arms and therefore require many batches before covering all $K$ arms.
Our results show that this bottleneck fundamentally reshapes the batch--regret trade-off.

We also show that this lower bound is essentially tight.

\begin{theorem}[Informal]
\label{thm:informal-upper}
For every bit budget $\Omega(\log T) \le W \le O(K\log T)$
, there exists a stochastic static-grid bandit algorithm using at most $W$ bits of persistent memory and $\widetilde{O}(K/W)$ batches that achieves near-minimax regret
$\widetilde{O}(\sqrt{KT})$.
\end{theorem}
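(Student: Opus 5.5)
The plan is to build a batched successive-elimination algorithm that processes the arms in \emph{groups}, with group size matched to the memory budget. Set $m := \lfloor W/(c\log T)\rfloor$ for a suitable constant $c$, so that $m$ arms' statistics fit in $W$ bits: each statistic is an empirical mean quantized to precision $1/T$ plus a pull count, i.e.\ $O(\log T)$ bits per arm. Partition $[K]$ into $\lceil K/m\rceil$ groups $G_1,\dots,G_{\lceil K/m\rceil}$ of size at most $m$. Besides group statistics, the algorithm persistently stores only a single ``champion'' arm index together with its empirical mean and confidence radius. This costs $O(\log T)$ extra bits, which is why we need $W=\Omega(\log T)$; when $W$ is at the top of its range, $m=K$ and we recover ordinary batched elimination.

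The algorithm runs a tournament. For each group $G_j$, in turn, we run a batched elimination with a static, geometric schedule on $G_j\cup\{\text{champion}\}$. In round $r=1,\dots,R$ with $R=O(\log T)$, every surviving arm is pulled $n_r = 2^r\lceil\log(KT)\rceil$ times, and an arm is dropped if its empirical mean falls $2\sqrt{\log(KT)/n_r}$ below the best. Each round is one batch, and rounds stop once $\sum_r n_r$ reaches about $T/(K/m)$ per survivor, so the cumulative pull budget per arm is $T_0 := \Theta(Tm/K)$. The survivor with the largest mean becomes the new champion, the group statistics are discarded, and we move to the next group. After all groups are processed, the remaining horizon is spent committing to the champion in one final batch. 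The batch count is $(K/m)\cdot O(\log T)=\widetilde O(K\log T/W)=\widetilde O(K/W)$. The batch lengths are data-independent, since a schedule can always pad eliminated pulls onto the champion, so the grid is static. The memory is $O(m\log T)+O(\log T)\le W$.

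For the regret analysis, condition on a good event $\mathcal E$ in which every empirical mean is within its confidence radius; a union bound gives $\P(\mathcal E^c)\le 1/T$. On $\mathcal E$, the best arm $i^\star$ is never eliminated within its group, and it can later be displaced as champion only by an arm whose mean is within $O(\sqrt{\log(KT)/T_0})$ of it. To keep these losses from accumulating across groups, the champion's confidence bound must be refreshed rather than stacked. The champion re-enters every tournament and is re-estimated each time, so each replacement costs at most one confidence width at the current sample size, not a sum of such widths. A cleaner option is to keep the champion's pull count cumulative, which makes its radius shrink across groups, and to make the displacement rule require the challenger's lower bound to exceed the champion's estimate.

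The regret then decomposes into two terms. The first is the exploration cost inside a group: a suboptimal arm with gap $\Delta$ survives until $n_r\gtrsim\log(KT)/\Delta^2$, giving the standard $\min\{\Delta T_0,\log(KT)/\Delta\}$ per arm, which sums to $\widetilde O(\sqrt{K\cdot K T_0/m})=\widetilde O(\sqrt{KT})$ by Cauchy--Schwarz since $(K/m)T_0\le T$. The second is the commitment cost, $T\cdot\widetilde O(\sqrt{1/T_0})$ if handled naively, which is too large when $m\ll K$. I therefore expect the commitment phase to be the main obstacle.

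The first fix I would try is to let the champion be pulled throughout: at every group, allocate a long block of pulls to the current champion alone. This accumulates a cumulative sample of size $\Theta(T\cdot j m/K)$ by group $j$, so its confidence radius is $\widetilde O(\sqrt{K/(Tj m)})$. Champion pulls then incur regret only through the gap between the champion and $i^\star$, and this gap is bounded by the radius at the moment $i^\star$'s group was processed. Summing this over the remaining horizon, with a doubling argument over the groups, should give $\widetilde O(\sqrt{KT})$. If it does not, I would fall back on a second pass over the groups with a doubled $T_0$. This multiplies the batch count only by $O(\log T)$ and costs $O(\log T)$ passes, which keeps $B=\widetilde O(K/W)$.
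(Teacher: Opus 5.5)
\textbf{There is a genuine gap, and it is not in the commitment phase where you placed it.}

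\textbf{The within-group exploration bound fails.} Your per-arm bound $\min\{\Delta_a T_0,\log(KT)/\Delta_a\}$ with $\Delta_a=\mu^\star-\mu_a$ is not valid. Inside $G_j$ you eliminate relative to the best arm of $G_j\cup\{\text{champion}\}$. Before the group containing $i^\star$ is processed, nothing certifies that the champion is near-optimal.

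Here is a concrete counterexample. Set $\mu_{i^\star}=1/2$, put $i^\star$ in the last group, and give every other arm mean $0$. In every earlier group all arms, the champion included, are tied. Nothing is eliminated, so each of the $K-m$ earlier arms uses its full budget at gap $1/2$, and the regret is $\Omega(KT_0)$.

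No single-pass choice of $T_0$ avoids this:
\begin{itemize}
\item Fitting the horizon forces $KT_0\lesssim T$. Your $T_0=\Theta(Tm/K)$ already violates this: with padding, the static schedule occupies about $(m+1)T$ rounds.
\item A $\widetilde O(\sqrt{KT})$ commitment cost forces $T_0=\widetilde\Omega(T/K)$.
\item So this instance costs $\widetilde\Omega(T)$.
\end{itemize}
Your first fix, long champion-only blocks, makes things worse, since those pulls go to an uncertified champion.

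\textbf{The commitment phase is not the obstacle.} Take $T_0=\Theta(T/K)$ and a margin-based replacement rule, so that on $\mathcal E$ the champion's true mean never decreases. Then the final champion has gap $O(\sqrt{K\log T/T})$, and committing costs $O(\sqrt{KT\log T})$.

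\textbf{The missing idea is several passes over \emph{all} arms.} Per-arm sample sizes should start cheap and grow, and the champion should be carried across passes, so that every expensive pass begins with a certified champion. The paper does this as follows.
\begin{itemize}
\item It runs $L=\lfloor\log_2\log_2(T/10K)\rfloor$ passes on the minimax grid $t_i=\lceil\sqrt{t_{i-1}T/(10K)}\rceil$.
\item In pass $i$, each block of $S$ challengers is tested at levels $k=1,\dots,i$, using $t_k$ fresh samples, against a frozen benchmark.
\item A challenger is deactivated below a $2c(t_k)$ margin. Replacement of the incumbent requires a $2c(t_i)$ margin.
\end{itemize}
The regret accounting then goes through as follows.
\begin{itemize}
\item The first pass costs only about $Kt_1\approx\sqrt{KT}$.
\item Entering pass $i$, the incumbent has gap at most $4c(t_{i-1})$, because $i^\star$ was compared with $t_{i-1}$ samples in the previous pass.
\item Hence refresh pulls and benchmark-filled pulls cost $O(c(t_{i-1}))$ each.
\item A challenger pulled at level $k\ge 2$ survived level $k-1$, so its gap is $O(c(t_{k-1}))$.
\item Since $t_k/\sqrt{t_{k-1}}=\Theta(\sqrt{T/K})$, the total is $O(L^2\sqrt{KT\log T})$, using $O((K/S)L^2)$ batches.
\end{itemize}

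\textbf{Your fallback can be repaired along the same lines.} Start each arm at about $\sqrt{T/K}$ pulls and double per pass up to $\Theta(T/K)$, keeping the champion across passes. This gives $O(\log T)$ passes and $O((K/m)\log^2T)$ batches, still $\widetilde O(K/W)$. As written, however, it doubles from the infeasible $\Theta(Tm/K)$. It also never states why the multi-pass structure is needed: to control exploration cost before the champion is certified.
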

A formal version of Theorem~\ref{thm:informal-upper} is given in \Cref{sec:upper_bound}.
Here the $\Omega(\log T) \le W$ lower bound is needed to represent the relevant counters and empirical means to inverse-polynomial precision in $T$, while the $W \le O(K\log T)$ upper bound is without loss of generality since standard finite-armed bandit algorithms only need to store per-arm summaries for the $K$ arms.

\paragraph{Our contributions.}
We formalize stochastic bandits under simultaneous space and adaptivity constraints and show that limited space fundamentally increases the batch complexity of near-minimax learning.
Our main contributions are threefold.
\begin{itemize}
    \item \textbf{A general lower bound under limited memory.}
    We prove that, for sufficiently large $T$, any algorithm with a $W$-bit memory budget that achieves the standard near-minimax regret rate $\widetilde{O}(\sqrt{KT})$ must use at least $\Omega(K/W)$ batches, even under adaptive grids.
    
    \item \textbf{Two technical ingredients of independent interest.}
    In proving our lower bound, we establish two technical lemmas of independent interest: (i) a probability-level per-good-arm exploration lemma, and (ii) a localized change-of-measure lemma for under-sampling events, which lets the proof pay only for the first $n$ pulls of the perturbed arm.

    \item \textbf{A near-matching upper bound across memory budgets.}
    We propose a static-grid algorithm such that, for any bit budget $\Omega(\log T) \le W \le O(K\log T)$, it uses at most $W$ bits of persistent memory, at most $\widetilde{O}(K/W)$ batches, and achieves near-minimax regret $\widetilde{O}(\sqrt{KT})$.
\end{itemize}

\begin{remark}
    The adaptive-grid model has stronger algorithmic power than the static-grid model, because the learner can choose the batch boundaries adaptively.
    Our lower bound is proved for the algorithmically stronger adaptive-grid model, while our upper bound is achieved in the algorithmically weaker static-grid model. Therefore, in each setting, our result is established in the strongest possible sense.
    \end{remark}

\subsection{Technical Overview}
\label{subsec:tech_overview}

The introduction argued informally that simultaneous constraints on space and adaptivity create a bottleneck in how information can be accumulated and reused over time.
We now sketch how this intuition becomes a proof.
Our main technical contribution is a lower bound driven by this bottleneck.

\paragraph{A hard prior and the lower bound proof roadmap.}
For simplicity, assume that $K$ is even.
For each subset $S\subset[K]$ with $|S|=K/2$, let $\nu_S$ denote the Bernoulli instance in which arms in $S$ have mean $1/2$ and the remaining arms have mean $0$.
Let
\[
S^\star \subset [K]
\]
be drawn uniformly from all subsets of size $K/2$, and let $\nu_{S^\star}$ be the realized base instance.

On this prior, a near-minimax learner must satisfy two competing requirements.
First, it must avoid allocating a substantial exploration budget to bad arms, since every pull of a bad arm incurs regret $1/2$.
Second, it cannot afford to ignore too many good arms: for every good arm there is a nearby instance in which that arm becomes uniquely optimal, so uniform regret forces the learner to remain prepared for many such perturbations.
We capture this tension through a thresholded sampling profile $Y\in\{0,1\}^K$ that records which arms receive exploration on the natural per-arm scale $\widetilde{\Theta}(T/K)$.

The lower bound then has three steps.

We first show that bad arms rarely cross this threshold, while every good arm crosses it with constant probability. Equivalently, conditioned on $S^\star$, a constant fraction of the good arms cross in expectation.
This makes $Y$ a noisy coordinatewise predictor of the hidden good-arm set and yields a linear information lower bound,
\[
I(S^\star;Y)=\Omega(K).
\]
We then show that a $B$-batch protocol with only $W$ bits of persistent memory can transmit at most $O(BW)$ bits of instance-dependent information across batch boundaries.
Comparing the two bounds gives the lower bound. Below we explain our proof in more detail.

\paragraph{A thresholded sampling profile.}
The constant gap between good and bad arms is what makes this family useful for a regret lower bound.
Under any base instance $\nu_S$, every pull of a bad arm incurs regret $1/2$, whereas pulling a good arm incurs no regret.
Hence a near-minimax learner cannot afford to spend many rounds on bad arms and must concentrate most of its budget on good arms.
This makes it natural to distinguish arms according to whether they receive exploration on the average per-arm scale, namely about $\widetilde{\Theta}(T/K)$ pulls.
We therefore introduce a threshold
\[
n=\widetilde{\Theta}(T/K)
\]
and define the thresholded sampling profile
\[
Y_i := \ind\{N_i(T)\ge n\},\qquad Y\in\{0,1\}^K
\]
where $N_i(T)$ means the number of pulls of arm $i$ up to the horizon $T$.
Thus $Y_i$ records whether arm $i$ received a nontrivial share of the exploration budget.

A bound on false positives is immediate.
If a bad arm crosses the threshold, then the learner has already paid $\Theta(n)$ regret on that arm.
Hence low regret directly implies that not too many bad arms can satisfy $Y_i=1$.
The harder direction is to show that good arms cannot be ignored either.

\paragraph{Why false-negative control is the main difficulty.}
What is missing is the complementary statement that genuinely good arms must also cross the threshold with constant probability.
This is the delicate part of the proof.
Let $\nu_{S^\star}$ be the realized instance under this prior, and on that instance alone one might imagine that it suffices to find \emph{some} good arm and then exploit it. However, we prove that \emph{every} good arm must cross the threshold with constant marginal probability.
The key point is that the regret guarantee is uniform over all instances, not tailored to a single $\nu_S$.
For every $j\in S^\star$, we consider a neighboring instance $\nu_{S^\star}^{(j)}$ in which arm $j$ has mean $1/2+\Delta_0$, for some perturbation size $\Delta_0>0$ to be specified later, while all other arm means remain the same as in $\nu_{S^\star}$.
Thus arm $j$ becomes uniquely optimal under $\nu_{S^\star}^{(j)}$.
If the learner largely ignores arm $j$ under the base instance, then it is poorly positioned for $\nu_{S^\star}^{(j)}$ and would incur large regret there.
Hence no genuinely good arm can be safely ignored.

This is what turns the hard prior into a source of linear information.
The learner is not merely required to identify one arm worth exploiting; it must gather coordinatewise evidence about many membership bits of $S^\star$.

\paragraph{Why probability-level exploration requires localization.}
To make the previous intuition formal, we need a probability-level statement:
for every good arm $i$,
\[
\P_{\nu_S}(N_i(T)\ge n)\ge \beta
\]
for some constant $\beta>0$.
An expectation lower bound on $N_i(T)$ is too weak for this purpose.
Indeed, a learner could ignore arm $i$ on most trajectories and explore it extremely heavily only on a tiny exceptional set of runs.
Such bursty exploration can keep $\E[N_i(T)]$ at scale $n$ while making $\P(N_i(T)\ge n)$ arbitrarily small.
On the base instance $\nu_S$, this pathology is not ruled out by regret, because replacing pulls of one good arm by pulls of other good arms incurs no additional regret.

A natural route is therefore to argue that, on the perturbed instance $\nu_S^{(i)}$, near-minimax regret forces arm $i$ to be sampled at least $n$ times with sufficiently large probability, and then transfer this statement back to the base instance $\nu_S$.
The obstacle is that standard transcript-level change-of-measure bounds are poorly aligned with the under-sampling event
\[
E_i:=\{N_i(T)<n\}.
\]
Standard change-of-measure bounds compare the \emph{full} laws of the two experiments and therefore pay for all pulls of arm $i$, whereas the event $E_i$ depends only on whether the first $n$ pulls of arm $i$ ever occur.
A useful toy picture is the following: under the perturbed instance, the learner may use the first $n$ pulls of arm $i$ to discover that $i$ is special and then exploit $i$ for the remaining $T-n$ rounds.
The event $E_i$ is already decided after those first $n$ pulls, so all later exploitation pulls are irrelevant to $E_i$.
A standard full-transcript comparison nevertheless pays for all pulls of arm $i$, including those post-identification exploitation pulls, and can therefore incur a cost as large as order $T\Delta_0^2$, which is far too large for our parameter regime.

Our localized change-of-measure lemma truncates this comparison to the first $n$ pulls relevant to $E_i$, so the transfer cost is only of order $n\Delta_0^2$.
With the perturbation gap chosen at the nearly minimax-detectable scale
\[
\Delta_0=\widetilde{\Theta}\!\left(\sqrt{K/T}\right),
\]
we take
\[
n\asymp \frac{1}{\Delta_0^2}=\widetilde{\Theta}(T/K),
\]
so that
\[
n\Delta_0^2=O(1).
\]
This is exactly what allows the desired probability-level exploration guarantee to be transferred back from $\nu_S^{(i)}$ to $\nu_S$.
\paragraph{From thresholded exploration to an $\Omega(K)$ information requirement.}
The previous two steps give exactly the ingredients needed to interpret the thresholded profile $Y$ as a noisy classifier of the hidden membership bits
\[
X_i := \ind\{i\in S^\star\}.
\]
False positives are controlled because a bad arm that crosses the threshold already incurs $\Theta(n)$ regret, while false negatives are controlled because the localized perturbation argument shows that each good arm crosses the threshold with constant probability.
Consequently, the naive predictor $\hat X_i := Y_i$ has average error bounded away from $1/2$.
Equivalently, the average Bayes error for predicting $X_i$ from $Y_i$ is strictly smaller than random guessing by a constant margin.
A chain-rule / per-coordinate entropy argument then upgrades this coordinatewise predictive advantage into the global information lower bound
\[
I(S^\star;Y)=\Omega(K).
\]

\paragraph{From information to batches.}
The second half of the lower bound utilizes the batched nature of our algorithm. 
Once a batch starts, the full within-batch action rule is fixed using only the information available at that batch boundary.
Therefore the only instance-dependent information that can pass from one batch to the next is what is retained in persistent memory at the boundaries.
We show that the sampling profile $Y$ is a deterministic function of the seed and the boundary-memory transcript; in the adaptive-grid case, the realized batch boundaries are themselves recursively determined by the same objects.
Since the seed is independent of $S^\star$ and each boundary state carries at most $W$ bits, the entire boundary-memory transcript has entropy at most $O(BW)$.
A data-processing argument therefore yields
\[
I(S^\star;Y)\le O(BW).
\]
Comparing this with the lower bound on the same quantity gives
\[
B=\Omega(K/W)
\]
in the large-horizon regime.

\paragraph{Upper bound intuition.}
The upper bound matches the same bottleneck from the algorithmic side.
Instead of storing a large active set of candidate arms, the algorithm keeps only an incumbent arm, a small summary of its performance, and a block of at most $S$ challenger arms at a time.
These challengers are compared with the incumbent through a multistage test, after which the algorithm moves on to the next block.
This blockwise scan explicitly trades memory for adaptivity:
with block size $S$, the implementation uses $O(S\log T)$ bits of memory and the number of batches drops to $\widetilde{O}(K/S)$.
Equivalently, for any bit budget $W$ with $\Omega(\log T) \le W \le O(K\log T)$, choosing $S\asymp W/\log T$ gives an implementation using at most $W$ bits and $\widetilde{O}(K/W)$ batches.
The resulting algorithm nearly matches the lower bound up to polylogarithmic factors. 

Note that the  $W \le O(K\log T)$ upper bound is without loss of generality since standard finite-armed bandit algorithms only need to store per-arm summaries for the $K$ arms.

\paragraph{Organization.}
After discussing related work in Section~\ref{sec:related_work}, Section~\ref{sec:preliminaries} formalizes the space-bounded batched bandit model.
Section~\ref{sec:lower_bound} proves the lower bound in three steps: a probability-level exploration argument via localized one-arm perturbations, an information-acquisition lower bound, and a batch-memory bottleneck bound.
Section~\ref{sec:upper_bound} presents a nearly matching upper bound.
Section~\ref{sec:conclusion} concludes with open directions.
\section{Related Work}
\label{sec:related_work}

Our work lies at the intersection of two lines of research in bandit theory:
\emph{limited adaptivity} and \emph{space-bounded learning}.
We briefly review these two literatures and then explain how our setting differs from both.

\paragraph{Limited adaptivity.}
A large literature studies bandits under restrictions on how often the learner may update its policy.
For stochastic batched bandits, early results include \citet{perchet2016batched} for the two-armed case and \citet{gao2019batched}, which developed the BaSE policy and established near-optimal guarantees and lower bounds for general $K$-armed problems.
Subsequent work extended these ideas to broader settings, including stochastic, linear, and adversarial bandits \citep{esfandiari2021regret}, Thompson-sampling-based batched methods \citep{kalkanli2021batchedthompson}, and anytime batched algorithms with near-optimal $O(\log\log T)$-type batch complexity \citep{jin2021anytimebatched}.
Related guarantees are also known in structured settings such as finite-action linear contextual bandits \citep{han2020sequentialbatch,ruan2021limitedadaptivity}, high-dimensional sparse linear contextual bandits \citep{ren2024dynamicbatch}, generalized linear contextual bandits \citep{ren2020batchedglm,sawarni2024limitedadaptivity}, and large-action-space contextual linear bandits \citep{hanna2023efficient,ren2024optimalbatchedlinear}.
Related formulations restrict adaptivity through switching costs, delayed feedback, or more general round-complexity constraints \citep{arora2012policyregret,cesabianchi2013switchingcosts,joulani2013delayed,vernade2020delayedlinear,agarwal2017limitedadaptivity,jun2016toparmbatch}.
However, these works typically bound how often the policy may change while allowing the learner to retain arbitrarily rich information between updates.

\paragraph{Space-bounded learning.}
Learning with limited memory has been studied broadly in learning theory~\citep{Raz16space,Raz17space,Moshkovitz17space,Moshkovitz17generalspace}.
Within bandits, classical antecedents on finite-memory models go back to \citet{cover1968note,cover2003two}, and modern stochastic bandit results show that one can achieve strong regret guarantees with constant or logarithmic space under fully sequential interaction \citep{liau2018constantspace}.
Related memory-limited variants include memory-constrained adversarial bandits \citep{xu2021memoryconstrained}, limited-memory approaches based on subsampling or forgetting \citep{baudry2021limitedmemory}, and bounded arm-memory models in which the learner may keep statistics for only a limited number of arms at a time \citep{chaudhuri2020boundedarmmemory,maiti2021boundedarmmemory}.

A different notion appears in streaming bandits, where arms arrive in a stream and the learner is constrained by pass and storage limitations \citep{assadi2020exploration,jin2021optimalstreaming,agarwal2022sharp,assadi2022singlepasslowerbounds,wang2023singlepass,li2023tightmemoryregret,assadi2024bestarmevades}.
These models are quite different from ours. In the streaming formulations above, the available arm set changes over time, whereas in our setting the arm set is fixed and always accessible. Moreover, streaming algorithms are typically fully adaptive within a pass, while in our batched model the learner cannot use feedback observed within a batch to change later actions in that batch. This distinction is also reflected in the complexity trade-off: in streaming bandits, one can recover the optimal regret rate up to logarithmic factors with constant space and only $O(\log\log T)$ passes, whereas in our batched model, recovering the same rate with $O(\log T)$ bits of memory requires $\widetilde \Omega(K)$ batches. Furthermore, techniques used in proving lower bounds are fundamentally different between streaming bandits and our setting.

\paragraph{Our setting.}
The works closest in spirit to ours are batched bandits and finite-memory bandits.
Relative to the batched-bandit literature, we additionally impose an explicit bound on the persistent memory retained across batch boundaries.
Relative to the space-bounded bandit literature, we additionally restrict adaptivity by allowing policy updates only between batches.
We study this joint regime and show that combining the two constraints creates a qualitatively stronger bottleneck than either constraint alone.

\section{Preliminaries}
\label{sec:preliminaries}

\subsection{Notation}
For $n\ge 1$, write $[n]:=\{1,\dots,n\}$, and let $\ind\{E\}$ denote the indicator of an event $E$.
We use $\log$ for the natural logarithm and $\log_2$ for the base-$2$ logarithm.
For a bandit instance $\nu=(P_1,\dots,P_K)$, let $\P_\nu$ and $\E_\nu$ denote probability and expectation under the law induced by $\nu$ and the algorithm; when the instance itself is random, we write $\P$ and $\E$ for the corresponding mixture law.
We use $\widetilde{O}(\cdot)$, $\widetilde{\Omega}(\cdot)$ and $\widetilde{\Theta}(\cdot)$ to suppress polylogarithmic factors in $K$ and $T$, and write $H(\cdot)$ and $I(\cdot;\cdot)$ for Shannon entropy and mutual information, measured in bits.

\subsection{Stochastic multi-armed bandits}
\label{subsec:bandit_model}

We consider a $K$-armed stochastic bandit with horizon $T$.
Arm $i\in[K]$ has reward distribution $P_i$ supported on $[0,1]$ and mean
$\mu_i := \E_{X\sim P_i}[X]$.
At each round $t\in[T]$, the learner chooses an arm $A_t\in[K]$ and observes a reward
$R_t\sim P_{A_t}$; conditional on the instance, rewards are independent across pulls.
Let
\[
\mu^\star:=\max_{i\in[K]}\mu_i,
\qquad
\Reg_T := \sum_{t=1}^T (\mu^\star-\mu_{A_t}).
\]
We measure performance by the expected pseudo-regret $\E_\nu[\Reg_T]$.

\subsection{Space-bounded bandits}
\label{subsec:space_model}

A learner with a $W$-bit persistent-memory budget maintains a persistent state
$M_t\in\mathcal M$ after each round, where
\[
|\mathcal M|\le 2^W.
\]
The algorithm may use a random seed $U$, sampled once at time $0$ and independent of the bandit instance; $U$ is not counted toward the space budget.
When convenient, all algorithmic randomization is absorbed into $U$.
We do not restrict computation time, only the cross-round state.

In the fully sequential model, a $W$-bit learner is specified by measurable maps such that, for each round $t\in[T]$,
\[
A_t = \rho_t(M_{t-1},U),
\qquad
M_t = \phi_t(M_{t-1},U,A_t,R_t),
\]
with fixed initial state $M_0\in\mathcal M$.

\subsection{Batched bandits}
\label{subsec:batch_model}
Let
\[
\mathcal F_t := \sigma(U,A_1,R_1,\dots,A_t,R_t),
\qquad t=0,1,\dots,T,
\]
with the convention \(\mathcal F_0:=\sigma(U)\),
denote the natural filtration of the interaction.
In a $B$-batch protocol, the horizon is partitioned into consecutive batches
\[
0=t_0<t_1<\cdots<t_B=T,
\]
where batch $b\in[B]$ consists of rounds $\{t_{b-1}+1,\dots,t_b\}$.

The batching constraint is that, at time $t_{b-1}$, the learner must choose the next boundary $t_b$ and the entire within-batch action rule for rounds $t_{b-1}+1,\dots,t_b$ as $\mathcal F_{t_{b-1}}$-measurable objects.
Equivalently, rewards observed during batch $b$ cannot affect later actions in the same batch.

We consider two variants.
(i) In the \emph{static-grid} model, the full grid $(t_1,\dots,t_B)$ is fixed before interaction, equivalently, it is $\mathcal F_0$-measurable.
(ii) In the \emph{adaptive-grid} model, each $t_b$ is $\mathcal F_{t_{b-1}}$-measurable, but must still be chosen before batch $b$ begins.

\subsection{The combined model: space-bounded and batched bandits}
\label{subsec:memory_model}

We now combine batching with the $W$-bit memory constraint.
The learner maintains a persistent state $M_t\in\mathcal M$ with
$|\mathcal M|\le 2^W$.
All algorithmic randomness is absorbed into a seed $U$, sampled once at time $0$ and not counted toward the memory budget.
The current time index is public and likewise not counted toward the memory budget.
Thus, at the start of batch $b$, the learner's batch boundary and within-batch action plan may depend only on
$(t_{b-1},M_{t_{b-1}},U),$
which serves as a memory-constrained summary of the full pre-boundary history $\mathcal F_{t_{b-1}}$.

Formally, a $B$-batch, $W$-bit algorithm is specified by measurable maps as follows.
For each internal boundary $b\in[B-1]$,
\[
t_b=\psi_b(t_{b-1},M_{t_{b-1}},U),
\qquad
t_{b-1}<t_b<T,
\]
and we set $t_B:=T$.
For each batch $b\in[B]$ and round $t\in\{t_{b-1}+1,\dots,t_b\}$,
\[
A_t = \rho_{b,t}(t_{b-1},M_{t_{b-1}},U),
\qquad
M_t=\phi_t(M_{t-1},U,A_t,R_t).
\]

Rewards may be observed and the state $M_t$ updated within a batch, but this does not create within-batch adaptivity, since the maps $\rho_{b,t}$ are fixed at time $t_{b-1}$.
Hence information gathered during batch $b$ can influence later batches only through the boundary state $M_{t_b}$.

In the static-grid model, the internal boundaries $t_1,\dots,t_{B-1}$ are fixed before interaction (equivalently, they may depend on $U$ but not on observed rewards).
In the adaptive-grid model, the realized internal grid
$
\tau:=(t_1,\dots,t_{B-1})$
is recursively determined by $U$ and the boundary states
$M_{t_0},M_{t_1},\dots,M_{t_{B-1}}$.
We will use this fact in Section~\ref{subsec:batch_complexity} when bounding the information that can be transmitted across batches.

Finally, this formulation is at least as permissive as the classical delayed-feedback batched model for lower-bound purposes: rewards may be observed and the internal state updated within a batch, but such information still cannot affect later actions in the same batch.
Therefore any lower bound proved here also applies to the classical delayed-feedback batched setting.
\section{Lower Bounds on Batch Complexity under Space Constraints}
\label{sec:lower_bound}

This section proves the formal lower bound stated in Theorem~\ref{thm:batch_lower_bound_rigorous}.
We establish the result directly for the more permissive adaptive-grid model; the same conclusion therefore holds \emph{a fortiori} for static-grid algorithms, which form a special case.
At a high level, we construct a Bernoulli hard family in which half of the arms are good and half are bad, and study a thresholded sampling profile that records which arms are pulled on the natural per-arm scale.
The proof has three steps.
First, a regret argument together with a localized one-arm perturbation argument shows that bad arms cannot cross the threshold too often, while each good arm must cross it with constant probability.
Second, this makes the thresholded profile carry linear information about the hidden good-arm set.
Third, batching with a $W$-bit persistent-memory budget allows at most $O(BW)$ bits of instance-dependent information to pass across batch boundaries.
Comparing the two bounds yields
\[
B=\Omega(K/W).
\]

We now formalize this argument.

\paragraph{Bernoulli subclass.}
For $p\in[0,1]$, let $\Ber(p)$ denote the Bernoulli distribution on $\{0,1\}$ with mean $p$.
Since the hard family used below is Bernoulli, it suffices to prove the lower bound on
\[
\mathcal V_{\mathrm{Ber}}
:=
\bigl\{(\Ber(\mu_1),\dots,\Ber(\mu_K)):\ \mu_1,\dots,\mu_K\in[0,1]\bigr\}.
\]
Any regret guarantee over a larger class of stochastic bandit instances, such as all $[0,1]$-valued reward distributions, in particular implies the same guarantee on $\mathcal V_{\mathrm{Ber}}$.

\paragraph{Standing notation for this section.}
Throughout this section, we write
\[
\mathbf M:=(M_{t_1},\dots,M_{t_{B-1}})
\]
for the collection of persistent memory states at the batch boundaries. This is
only a notational device: we do not mean that the algorithm explicitly stores
the entire tuple \(\mathbf M\) simultaneously.
We also write $N_i(t)$ for the number of pulls of arm $i$ up to time $t$, and $N_i:=N_i(T)$.

\begin{reptheorem}[Formal]{thm:informal-lower}
\label{thm:batch_lower_bound_rigorous}
Suppose $K$ is even.
There exist universal constants $c,\gamma_0>0$ such that the following holds.

Let $\mathcal A$ be any adaptive-grid $B$-batch stochastic bandit algorithm with persistent state space $\mathcal M$ satisfying
$|\mathcal M|\le 2^W$.
If, for some constant $C\ge 1$,
\begin{equation}\label{eq:reg-def}
    \sup_{\nu\in\mathcal V_{\mathrm{Ber}}}\E_\nu[\Reg_T]
\le C\sqrt{KT}\log T\log K,
\end{equation}
with $T \ge c\, C^6 K\log^6 T\log^6 K$, then
\[
B \ge \frac{\gamma_0 K - \frac12\log_2(2K)}{W}.
\]
In particular,
\[
B=\Omega(K/W).
\]
\end{reptheorem}

\Cref{thm:batch_lower_bound_rigorous} applies to any $\tilde{O}(\sqrt{KT})$ regret bound with minor modifications; we adopt the concrete rate $O(\sqrt{KT}\log T\log K)$ here to exactly match the known batched-bandit upper bound~\citep{gao2019batched}.
Thus, the theorem shows that even attaining this rate under a $W$-bit persistent-memory budget requires batch complexity at least on the order of $K/W$.

\begin{definition}[Bernoulli hard family and thresholded sampling profile]
\label{def:hard_family_profile}
Assume $K$ is even, and let
\[
\mathcal S_{K/2}:=\{S\subset[K]: |S|=K/2\}.
\]
For each $S\in\mathcal S_{K/2}$, let $\nu_S\in\mathcal V_{\mathrm{Ber}}$ denote the bandit instance with arm distributions
\[
P_i=
\begin{cases}
\Ber(1/2), & i\in S,\\
\Ber(0), & i\notin S.
\end{cases}
\]
Equivalently, the arm means satisfy
\[
\mu_i=
\begin{cases}
1/2, & i\in S,\\
0, & i\notin S.
\end{cases}
\]
We place the prior
\[
S^\star \sim \mathrm{Unif}(\mathcal S_{K/2})
\]
on this family.

Given a threshold $n$, define the thresholded sampling profile
\[
Y_i := \ind\{N_i(T)\ge n\},
\qquad
Y=(Y_1,\dots,Y_K)\in\{0,1\}^K.
\]
\end{definition}

For $S\in\mathcal S_{K/2}$, $j\in S$, and $\Delta>0$, let
\[
\nu_S^{(j,\Delta)}
\]
denote the one-arm perturbation of $\nu_S$ obtained by replacing the reward law of arm $j$ by $\Ber(1/2+\Delta)$ and leaving all other arms unchanged.
When the perturbation size is fixed, we abbreviate this as $\nu_S^{(j)}$.

\paragraph{Proof roadmap.}
We now spell out the three ingredients announced above.
For readability, the third step is stated below as a coarse $BW$ upper bound, although the proof actually yields the slightly sharper $(B-1)W$ bound.

\begin{enumerate}
  \item[(A)] \textbf{Per-good-arm exploration (Section~\ref{subsec:necessary_exploration}).}
  We prove that for every $S\in\mathcal S_{K/2}$ and every good arm $j\in S$,
  \[
  \P_{\nu_S}(N_j(T)\ge n)\ge \beta
  \]
  for an absolute constant $\beta>0$.
  This is the only step that uses the one-arm perturbation family explicitly.

  \item[(B)] \textbf{Information lower bound (Section~\ref{subsec:info_lower_bound}).}
  We combine the false-negative control from Step~(A) with the false-positive control that comes directly from regret.
  Together these imply that the thresholded profile $Y$ is a noisy coordinatewise predictor of the hidden good-arm set, and hence
  \[
  I(S^\star;Y)=\Omega(K).
  \]

  \item[(C)] \textbf{From information to batches (Section~\ref{subsec:batch_complexity}).}
  We show that $Y$ is determined by the information available at batch boundaries, namely the seed and the persistent boundary states.
  Applying data processing together with
  \[
  H(\mathbf M)\le BW
  \]
  yields
  \[
  I(S^\star;Y)\le BW.
  \]
  In the adaptive-grid case, the realized batch boundaries are themselves determined by the same objects, so they do not create an additional instance-dependent information channel.
\end{enumerate}

Equivalently, the quantitative core of the proof is the information chain
\begin{equation}
\label{eq:main_info_chain}
\boxed{
\underbrace{\Omega(K)}_{\text{Step (B): information forced by the hard family}}
\ \le\ I(S^\star;Y)
\ \le\ \underbrace{BW}_{\text{Step (C): batch-memory bottleneck}}
}\,.
\end{equation}
Combining Steps~(B) and~(C) yields Theorem~\ref{thm:batch_lower_bound_rigorous}.

\subsection{An Event-Restricted Change of Measure and Per-Good-Arm Exploration}
\label{subsec:necessary_exploration}

This subsection has a local role in the proof and a broader methodological one.
Its immediate purpose is to establish a probability-level exploration guarantee for good arms in the hard family:
under every base instance $\nu_S$, each good arm must cross a prescribed sampling threshold with constant probability.
This will be the only output of the perturbation argument used later in the information-theoretic part of the lower bound.

At the same time, the underlying change-of-measure argument is not specific to our Bernoulli hard family.
For independent interest, we state it in a reusable adaptive-sampling form.
To keep the main proof focused on the lower-bound argument, the full measurable-space formulation and the associated truncation-based proofs are deferred to Appendix~\ref{app:localized_com}.

The conceptual point is the following.
Classical transcript-level change-of-measure arguments charge the discrepancy between two environments to the full amount of information accumulated by the algorithm.
For under-sampling events, however, this is often poorly aligned with the event itself.
An event such as $\{N_j(T)\le n\}$ is insensitive to any reward from the perturbed arm beyond its $n$-th pull.
Thus the natural comparison cost should scale with the part of the perturbed stream that the event can actually depend on, rather than with the entire transcript.
The results below formalize exactly this prefix-sensitive, event-visible comparison principle.

We then instantiate this general tool for one-arm perturbations of the hard family and obtain the following exploration lemma.

\begin{lemma}[Per-good-arm exploration]
\label{lem:per_good_arm_exploration}
In the hard family of Definition~\ref{def:hard_family_profile}, assume that
algorithm $\mathcal A$ satisfies the regret bound \eqref{eq:reg-def} for some
constant $C>0$.
There exist absolute constants $c_\Delta,\alpha,\beta>0$ such that the following holds.
Define
\[
\Delta_0 := c_\Delta C\log(T)\log(K)\sqrt{\frac{K}{T}},
\qquad
n:=\left\lfloor \alpha \frac{T}{C^2K\log^2(T)\log^2(K)}\right\rfloor.
\]
Assume $T$ is large enough so that $\Delta_0\le 1/4$ and $1\le n\le T/2$.
Then for every $S\in\mathcal S_{K/2}$ and every $j\in S$,
\[
\P_{\nu_S}\bigl(N_j(T)\ge n\bigr)\ge \beta.
\]
\end{lemma}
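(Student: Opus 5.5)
The argument has two parts. First, on the perturbed instance $\nu_S^{(j)}:=\nu_S^{(j,\Delta_0)}$, the regret bound \eqref{eq:reg-def} forces $N_j(T)\ge n$ with probability close to one. Second, we transfer this to the base instance $\nu_S$ using a localized change of measure, which only charges the first $n$ pulls of arm $j$.

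\textbf{Step 1 (perturbed instance).} Under $\nu_S^{(j)}$, arm $j$ is uniquely optimal with mean $1/2+\Delta_0$, and every other arm has gap at least $\Delta_0$. On the event $E_j=\{N_j(T)<n\}$, at least $T-n\ge T/2$ rounds are spent on suboptimal arms. Hence
\[
\E_{\nu_S^{(j)}}[\Reg_T]\ \ge\ \tfrac{T}{2}\,\Delta_0\,\P_{\nu_S^{(j)}}(E_j).
\]
Since $\nu_S^{(j)}\in\mathcal V_{\mathrm{Ber}}$, the regret bound gives
\[
\P_{\nu_S^{(j)}}(E_j)\le \frac{2C\sqrt{KT}\log T\log K}{T\Delta_0}=\frac{2}{c_\Delta}.
\]
Choosing $c_\Delta$ large (say $c_\Delta=16$) makes this at most $1/8$.

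\textbf{Step 2 (localized change of measure).} Realize both instances on a common probability space with a reward table: arm $i$'s $k$-th pull returns $X_{i,k}$. The two instances differ only in the law of the stream $(X_{j,k})_{k\ge1}$. The event $E_j$ is determined by the seed $U$, the streams of the other arms, and the prefix $(X_{j,1},\dots,X_{j,n-1})$. Indeed, the trajectory up to the $n$-th pull of $j$ uses only those variables, and $E_j$ is exactly the event that this pull never occurs. Therefore
\[
|\P_{\nu_S}(E_j)-\P_{\nu_S^{(j)}}(E_j)|\le \mathrm{TV}\bigl(\Ber(1/2)^{\otimes n},\Ber(1/2+\Delta_0)^{\otimes n}\bigr).
\]
This is the appendix's event-restricted change-of-measure lemma, which I would invoke directly. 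By Pinsker,
\[
\mathrm{TV}\le\sqrt{\tfrac{n}{2}\,\KL(\Ber(1/2),\Ber(1/2+\Delta_0))}.
\]
For $\Delta_0\le1/4$ the KL term is at most $c\Delta_0^2$ for an absolute $c$ (via the $\chi^2$ bound $\Delta_0^2/((1/2+\Delta_0)(1/2-\Delta_0))\le 4\Delta_0^2\cdot 4/3$). So the total variation is at most $\sqrt{c' n\Delta_0^2}$.

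By the definitions, $n\Delta_0^2\le \alpha c_\Delta^2$, with the factors $C^2\log^2T\log^2K\,(K/T)$ cancelling exactly. Choosing $\alpha$ small enough relative to $c_\Delta$ makes the total variation at most $1/8$. Combining with Step 1,
\[
\P_{\nu_S}(N_j(T)\ge n)=1-\P_{\nu_S}(E_j)\ge 1-\tfrac18-\tfrac18=\tfrac34=:\beta.
\]

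\textbf{Main obstacle.} The main difficulty is Step 2, specifically the rigorous justification that $E_j$ is measurable with respect to the seed, the other arms' streams, and the first $n-1$ rewards of arm $j$. This requires the reward-table (stack-of-rewards) coupling: the algorithm's actions are deterministic functions of $U$ and the rewards observed so far. Up to the stopping time of the $n$-th pull of $j$, only the prefix of arm $j$'s stream has been consumed. The batch and memory structure is irrelevant here, since the argument holds for any adaptive policy. This is precisely the localization that avoids paying $T\Delta_0^2$, which a full-transcript KL bound would incur.
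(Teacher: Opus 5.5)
Your proof is correct and has the same two-step structure as the paper's. First, regret on $\nu_S^{(j)}$ forces $\P_{\nu_S^{(j)}}(E_j)\le 2/c_\Delta$. Second, the transfer back to $\nu_S$ is localized, because $\{N_j(T)\le n\}$ is $\mathcal G_n$-measurable (Lemma~\ref{lem:budget_event_measurable}); this is exactly the truncated reward-table coupling you flag as the main obstacle.

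The routes differ in the transfer inequality. The paper applies Cauchy--Schwarz to the prefix likelihood ratio and gets a multiplicative bound, $\P_{\nu_S}(E_j)\le\sqrt{\P_{\nu_S^{(j)}}(E_j)}\,(1+\ChiSq(P\|Q))^{n/2}\le \tfrac12\cdot 1.8$. This gives $\beta=0.1$ with $c_\Delta=8$ and $\alpha=\ln(1.8)/512$. You instead use the additive bound $|\P_{\nu_S}(E)-\P_{\nu_S^{(j)}}(E)|\le \mathrm{TV}(P^{\otimes n},Q^{\otimes n})$ for $E\in\mathcal G_n$, followed by Pinsker and $\KL\le\ChiSq$.

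The paper never states this TV version, so you cannot literally ``invoke'' its appendix lemma for it. It does follow in one line from the facts used in the proof of Proposition~\ref{prop:prefix_chisq}: restricted to $\mathcal G_n$, both laws are products of a common law for $(U,(X_{i,\ell})_{i\neq j})$ with $P^{\otimes n}$ resp.\ $Q^{\otimes n}$. You should say this explicitly.

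Each form buys something different:
\begin{itemize}
\item Your additive form does not lose the square root on $\P_{\nu_S^{(j)}}(E_j)$, so it gives the stronger $\beta=3/4$. The cost is a much smaller $\alpha$, hence a smaller threshold $n$, which only shifts absolute constants downstream.
\item The paper's multiplicative form transfers small probabilities with only a square-root loss. It stays informative when $n\ChiSq$ is large, e.g.\ of order $\log(1/\delta)$ with $\P_1(E)\le\delta$, where a TV bound saturates at $1$. This is why the authors present it as a reusable tool.
\end{itemize}
For this lemma only constant probabilities are involved, so the two are interchangeable.
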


One admissible choice is
\[
c_\Delta=8,\qquad \alpha=\frac{\ln(1.8)}{512},\qquad \beta=0.1.
\]
The proof below verifies that these values work; the subsequent arguments only use that
$c_\Delta,\alpha,\beta$ are absolute positive constants.

Fix once and for all any admissible constants $c_\Delta,\alpha,\beta>0$
for which Lemma~\ref{lem:per_good_arm_exploration} holds, and define
\[
\Delta_0 := c_\Delta C\log(T)\log(K)\sqrt{\frac{K}{T}},
\qquad
n:=\left\lfloor \alpha \frac{T}{C^2K\log^2(T)\log^2(K)}\right\rfloor.
\]
All subsequent statements in this section are with respect to this fixed admissible choice.

The proof of Lemma~\ref{lem:per_good_arm_exploration} proceeds by instantiating the next three general results,
which form a reusable event-restricted change-of-measure tool for adaptive sampling problems with one-stream perturbations.

\subsubsection{An Event-Restricted Change of Measure via \texorpdfstring{$\ChiSq$}{ChiSq}-Divergence}

Many bandit lower bounds compare two environments that differ only on a single arm \(j\) and transfer probability statements between the corresponding transcript laws
\citep[see, e.g.,][]{kaufmann2016complexity,garivier2016optimal,lattimore2020banditalgorithms}.
Standard Kullback--Leibler- or total-variation-based arguments are transcript-level: they charge the comparison to all pulls of the perturbed arm.
For under-sampling events, this is often fundamentally misaligned.
If the event of interest is \(\{N_j(T)\le n\}\), then only the first \(n\) rewards of arm \(j\) can affect that event; anything observed from arm \(j\) after the \(n\)-th pull is irrelevant to it.

The key distinction is therefore between transcript-level information and event-visible information.
The result below isolates a reusable change-of-measure principle whose penalty is aligned with the latter.
More precisely, whenever the event of interest depends on at most the first \(n\) observations from the perturbed stream, the comparison cost is governed by \(n\) one-step \(\ChiSq\) factors.
We do not present this as a new global divergence inequality.
Rather, it is an event-restricted and prefix-sensitive packaging of standard likelihood-ratio and second-moment arguments.

\begin{definition}[$\ChiSq$-divergence]
For probability measures $P\ll Q$, define
\[
\ChiSq(P\|Q)
:=
\int \left(\frac{dP}{dQ}-1\right)^2\,dQ
=
\E_{\xi\sim Q}\!\left[\left(\frac{dP}{dQ}(\xi)-1\right)^2\right].
\]
Equivalently,
\[
\E_{\xi\sim Q}\!\left[\left(\frac{dP}{dQ}(\xi)\right)^2\right]
=
1+\ChiSq(P\|Q).
\]
\end{definition}

The identity above is exactly what makes $\ChiSq$ the natural divergence here:
the second moment of the prefix likelihood ratio over the first $n$ pulls factorizes as
\[
\bigl(1+\ChiSq(P\|Q)\bigr)^n.
\]

The full formal setup appears in Appendix~\ref{app:localized_com}.
For the statements below, it suffices to consider two adaptive-sampling environments
$\P_0$ and $\P_1$ induced by the same non-anticipating policy and differing only on one arm $j$,
whose reward laws are $P$ under $\P_0$ and $Q$ under $\P_1$.
Write
\[
\mathcal F_t:=\sigma(U,A_1,R_1,\dots,A_t,R_t)
\]
for the transcript filtration, and for each integer $n\ge 0$ let
\[
\mathcal G_n
:=
\sigma\!\Bigl(U,\ (X_{i,\ell})_{i\neq j,\ \ell\ge1},\ X_{j,1},\dots,X_{j,n}\Bigr).
\]

Lemma~\ref{lem:budget_event_measurable}, proved in Appendix~\ref{app:localized_com}, shows that under a pull budget $N_j(T)\le n$, the event in question is already visible in $\mathcal G_n$.

\begin{proposition}[Prefix-measurable $\ChiSq$ change of measure]
\label{prop:prefix_chisq}
Assume $\ChiSq(P\|Q)<\infty$.
Then for every integer $n\ge0$ and every event $E\in\mathcal G_n$,
\[
\P_0(E)
\le
\sqrt{\P_1(E)}\,
\bigl(1+\ChiSq(P\|Q)\bigr)^{n/2}
\le
\sqrt{\P_1(E)}\,
\exp\!\left(\frac{n}{2}\ChiSq(P\|Q)\right).
\]
\end{proposition}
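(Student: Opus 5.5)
The plan is to realize both environments on a common ``stack of rewards'' probability space and do a Cauchy--Schwarz change of measure using only the prefix likelihood ratio. Concretely, I would use the standard reward-table construction. Let $U$ be the seed and, for every arm $i$ and index $\ell\ge1$, let $X_{i,\ell}$ be the reward of the $\ell$-th pull of arm $i$. Under $\P_0$ the $X_{j,\ell}$ are i.i.d.\ $P$, and under $\P_1$ they are i.i.d.\ $Q$. All other variables, namely $U$ and $(X_{i,\ell})_{i\ne j}$, have the same joint law in both environments and are independent of arm $j$'s stack. The transcript is then a deterministic function of these variables through the policy, so both $\P_0$ and $\P_1$ are pushforwards of the respective product measures. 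It therefore suffices to compare the two product measures restricted to $\mathcal G_n$. (If $\P_0,\P_1$ are defined only on transcripts, events in $\mathcal G_n$ are understood on this extended space; that is the setup of Appendix~\ref{app:localized_com}.)

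On $\mathcal G_n$, the restriction of $\P_0$ is absolutely continuous with respect to that of $\P_1$. This follows from $P\ll Q$, which is implicit in $\ChiSq(P\|Q)<\infty$. The density is
\[
L_n=\prod_{\ell=1}^n \frac{dP}{dQ}(X_{j,\ell}),
\]
because the components other than arm $j$'s first $n$ rewards have identical laws and are independent of those rewards. The next step is to show this factorization explicitly.

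For $E\in\mathcal G_n$, I would then write $\P_0(E)=\E_1[L_n\ind\{E\}]$ and apply Cauchy--Schwarz:
\[
\P_0(E)\le \sqrt{\P_1(E)}\,\sqrt{\E_1[L_n^2]}.
\]
Under $\P_1$ the factors of $L_n$ are i.i.d.\ with $\E_Q[(dP/dQ)^2]=1+\ChiSq(P\|Q)$, so $\E_1[L_n^2]=(1+\ChiSq(P\|Q))^n$. This gives the first inequality. The second inequality follows from $1+x\le e^x$. The case $n=0$ is trivial, since then $L_0=1$ and the claim reads $\P_0(E)\le\sqrt{\P_1(E)}$, which holds because the two measures agree on $\mathcal G_0$.

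The main obstacle is the measure-theoretic bookkeeping. One must justify that the transcript laws really arise from this common reward-table space, and that the restriction to $\mathcal G_n$ has exactly the product density $L_n$. In particular, one has to check that no dependence on later rewards $X_{j,\ell}$ with $\ell>n$ leaks in. This holds because $\mathcal G_n$ does not contain those variables and they are independent of everything in $\mathcal G_n$ under both measures. Once this is settled, the inequality itself is a two-line Cauchy--Schwarz computation.
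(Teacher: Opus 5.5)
Your proposal is correct and follows essentially the same route as the paper. Both arguments identify $d\P_0/d\P_1$ on $\mathcal G_n$ as the prefix likelihood ratio $L_n=\prod_{\ell\le n}\frac{dP}{dQ}(X_{j,\ell})$, using that the seed and the other arms' streams are independent of arm $j$'s first $n$ rewards and have the same law under both measures. They then write $\P_0(E)=\E_1[\ind_E L_n]$, apply Cauchy--Schwarz, compute $\E_1[L_n^2]=(1+\ChiSq(P\|Q))^n$, and finish with $1+x\le e^x$.
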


\begin{proof}
Let
\[
\P_k^{(n)}:=\P_k|_{\mathcal G_n},\qquad k\in\{0,1\},
\]
denote the restrictions of $\P_0$ and $\P_1$ to $\mathcal G_n$.
Define
\[
L_n:=\prod_{\ell=1}^{n}\frac{dP}{dQ}(X_{j,\ell}),
\qquad
L_0:=1.
\]
Under both $\P_0$ and $\P_1$, the random element
\[
\Bigl(U,\ (X_{i,\ell})_{i\neq j,\ \ell\ge1}\Bigr)
\]
has the same law and is independent of $(X_{j,1},\dots,X_{j,n})$.
Moreover,
\[
(X_{j,1},\dots,X_{j,n})\sim P^{\otimes n}\ \text{under }\P_0,
\qquad
(X_{j,1},\dots,X_{j,n})\sim Q^{\otimes n}\ \text{under }\P_1.
\]
Hence $\P_0^{(n)}\ll \P_1^{(n)}$ and
\[
\frac{d\P_0^{(n)}}{d\P_1^{(n)}}=L_n
\qquad \P_1\text{-a.s. on }\mathcal G_n.
\]
Equivalently, for every bounded $\mathcal G_n$-measurable random variable $H$,
\[
\E_0[H]=\E_1[H\,L_n].
\]
Applying this identity with $H=\ind_E$ yields
\[
\P_0(E)=\E_1[\ind_E L_n].
\]
By Cauchy--Schwarz,
\[
\P_0(E)
\le
\sqrt{\P_1(E)}\,
\sqrt{\E_1[L_n^2]}.
\]
Under $\P_1$, the variables $X_{j,1},\dots,X_{j,n}$ are i.i.d.\ with law $Q$, so
\[
\E_1[L_n^2]
:=
\prod_{\ell=1}^{n}
\E_{\xi\sim Q}\!\left[\left(\frac{dP}{dQ}(\xi)\right)^2\right]
=
\bigl(1+\ChiSq(P\|Q)\bigr)^n.
\]
This proves the first inequality.
The second follows from $1+x\le e^x$.
\end{proof}

\begin{lemma}[Event-restricted $\ChiSq$ change of measure for adaptive sampling]
\label{lemma:localized_change_of_measure}
Assume $\ChiSq(P\|Q)<\infty$.
Then for every event $E\in\mathcal F_T$ and every integer $n\ge0$,
\[
\P_0\!\bigl(E\cap\{N_j(T)\le n\}\bigr)
\le
\sqrt{\P_1\!\bigl(E\cap\{N_j(T)\le n\}\bigr)}\,
\bigl(1+\ChiSq(P\|Q)\bigr)^{n/2}.
\]
Consequently,
\[
\P_0\!\bigl(E\cap\{N_j(T)\le n\}\bigr)
\le
\sqrt{\P_1(E)}\,
\exp\!\left(\frac{n}{2}\ChiSq(P\|Q)\right).
\]
\end{lemma}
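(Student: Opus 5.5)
The plan is to reduce the lemma to Proposition~\ref{prop:prefix_chisq} by showing that the event $E\cap\{N_j(T)\le n\}$ coincides, up to a null set, with an event in $\mathcal G_n$. We work in the stack-of-rewards (reward table) representation: arm $i$'s $\ell$-th pull returns $X_{i,\ell}$, and the table together with $U$ is independent. The policy is non-anticipating and is the same under $\P_0$ and $\P_1$. So the whole transcript $(U,A_1,R_1,\dots,A_T,R_T)$ is a fixed measurable function $\Phi$ of $U$ and the table, and $E=\{\Phi\in D\}$ for some measurable $D$.

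The key step is a truncation construction, which is what Lemma~\ref{lem:budget_event_measurable} in the appendix provides. Define a modified table in which arm $j$'s entries beyond index $n$ are replaced by a fixed dummy value $x_0$, and set $\tilde\Phi := \Phi(U,(X_{i,\ell})_{i\ne j},X_{j,1},\dots,X_{j,n},x_0,x_0,\dots)$. Then $\tilde\Phi$ is $\mathcal G_n$-measurable. Consider any realization on which the true transcript satisfies $N_j(T)\le n$. We argue by induction over $t$ that the true and modified runs agree round by round. At each round the action depends only on the past; if arm $j$ is pulled, it is pulled for at most the $n$-th time, so the reward read is one of $X_{j,1},\dots,X_{j,n}$ and is unchanged. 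Conversely, if the modified transcript has $N_j(T)\le n$, the same induction shows the true run agrees with it. Hence
\[
E\cap\{N_j(T)\le n\}=\{\tilde\Phi\in D,\ \tilde N_j(T)\le n\}\in\mathcal G_n,
\]
where $\tilde N_j$ is the pull count read from $\tilde\Phi$. This identity holds surely, so it holds under both measures.

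Applying Proposition~\ref{prop:prefix_chisq} to this $\mathcal G_n$-event gives the first inequality immediately. The second follows from monotonicity, $\P_1(E\cap\{N_j(T)\le n\})\le\P_1(E)$, together with $1+x\le e^x$.

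The main obstacle is the measurability bookkeeping. We must check that the reward-table construction reproduces the laws $\P_0,\P_1$ of the adaptive interaction; this is standard and holds because rewards are conditionally independent given the action and the pull index. We must also verify the two-directional agreement argument carefully, so that the event identity is exact and not merely an inclusion. Everything else is a direct invocation of the proposition.
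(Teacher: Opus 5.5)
Your proposal is correct and follows essentially the same route as the paper, which also reduces to Proposition~\ref{prop:prefix_chisq} via the truncation construction of Lemma~\ref{lem:budget_event_measurable} and then uses monotonicity and $1+x\le e^x$ for the second bound. The only cosmetic difference is how $\{N_j(T)\le n\}=\{\widetilde N_j^{(n)}(T)\le n\}$ is proved: the paper shows that the $(n+1)$-st pull times of arm $j$ coincide in the true and truncated runs, whereas you run the coupling induction once in each direction, which is equivalent.
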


\begin{proof}
By Lemma~\ref{lem:budget_event_measurable},
\[
E\cap\{N_j(T)\le n\}\in\mathcal G_n.
\]
Apply Proposition~\ref{prop:prefix_chisq} to the event
\[
E\cap\{N_j(T)\le n\}.
\]
The second inequality follows from
\[
\P_1\!\bigl(E\cap\{N_j(T)\le n\}\bigr)\le \P_1(E)
\]
and
\[
1+x\le e^x.
\]
\end{proof}

\begin{corollary}[Specialization to events already enforcing the budget]
\label{cor:blackbox_local_chisq}
If $E\in\mathcal F_T$ satisfies $E\subseteq\{N_j(T)\le n\}$, then
\[
\P_0(E)
\le
\sqrt{\P_1(E)}\,
\bigl(1+\ChiSq(P\|Q)\bigr)^{n/2}
\le
\sqrt{\P_1(E)}\,
\exp\!\left(\frac{n}{2}\ChiSq(P\|Q)\right).
\]
In particular, if $\P_1(E)\le\delta$, then
\[
\P_0(E)
\le
\sqrt{\delta}\,
\exp\!\left(\frac{n}{2}\ChiSq(P\|Q)\right).
\]
Equivalently,
\[
\P_1(E)
\ge
\P_0(E)^2\,
\bigl(1+\ChiSq(P\|Q)\bigr)^{-n}
\ge
\P_0(E)^2\,
\exp\!\bigl(-n\,\ChiSq(P\|Q)\bigr).
\]
\end{corollary}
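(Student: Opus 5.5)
The corollary is a direct specialization of Lemma~\ref{lemma:localized_change_of_measure}, so the plan is to apply that lemma with $E$ itself and simplify using the inclusion $E\subseteq\{N_j(T)\le n\}$.

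\textbf{Step 1: the main inequality.} Since $E\subseteq\{N_j(T)\le n\}$, we have $E\cap\{N_j(T)\le n\}=E$. The first inequality of Lemma~\ref{lemma:localized_change_of_measure} then reads
\[
\P_0(E)\le\sqrt{\P_1(E)}\,\bigl(1+\ChiSq(P\|Q)\bigr)^{n/2}.
\]
The exponential form follows from $1+x\le e^x$ applied with $x=\ChiSq(P\|Q)\ge0$, raised to the power $n/2$. Equivalently, one can note that $E\in\mathcal G_n$ by Lemma~\ref{lem:budget_event_measurable} and invoke Proposition~\ref{prop:prefix_chisq} directly.

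\textbf{Step 2: the ``in particular'' clause.} If $\P_1(E)\le\delta$, monotonicity of the square root gives $\sqrt{\P_1(E)}\le\sqrt\delta$. Substituting this into the exponential bound yields
\[
\P_0(E)\le\sqrt\delta\,\exp\!\left(\frac n2\ChiSq(P\|Q)\right).
\]

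\textbf{Step 3: the equivalent lower-bound form.} Both sides of the main inequality are nonnegative, so squaring preserves it:
\[
\P_0(E)^2\le\P_1(E)\,\bigl(1+\ChiSq(P\|Q)\bigr)^{n}.
\]
The factor $(1+\ChiSq(P\|Q))^n$ is strictly positive, so dividing by it gives
\[
\P_1(E)\ge\P_0(E)^2\,\bigl(1+\ChiSq(P\|Q)\bigr)^{-n}.
\]
Finally, $(1+x)^{-n}\ge e^{-nx}$ again follows from $1+x\le e^x$, which gives the last inequality.

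\textbf{Main obstacle.} There is no substantive obstacle, because all the measure-theoretic content sits in Lemma~\ref{lem:budget_event_measurable}. That lemma shows that an event enforcing the budget $N_j(T)\le n$ is $\mathcal G_n$-measurable, and it is assumed here. The only points to check are that $\ChiSq(P\|Q)<\infty$ (a standing assumption) and that the inequalities hold trivially when $\P_0(E)=0$, which they do.
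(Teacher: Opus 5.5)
Your proposal is correct and follows the paper's own route: note $E=E\cap\{N_j(T)\le n\}$, apply Lemma~\ref{lemma:localized_change_of_measure}, then use $1+x\le e^x$ and square and rearrange. Your write-up simply spells out the routine rearrangement steps (the $\delta$ clause and the squared lower-bound form) that the paper leaves implicit.
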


\begin{proof}
If $E\subseteq\{N_j(T)\le n\}$, then
\[
E=E\cap\{N_j(T)\le n\}.
\]
Applying Lemma~\ref{lemma:localized_change_of_measure} and rearranging gives the desired inequality.
\end{proof}

\paragraph{How the general tool will be used.}
In our application, \(E\) will be an under-sampling event for a single arm \(j\), and \((\P_0,\P_1)\) will correspond to a base instance and a one-arm perturbed instance.
The point of Corollary~\ref{cor:blackbox_local_chisq} is that once one proves under the perturbed environment that \(\P_1(E)\) is small, the same event can be transferred back to the base environment with a penalty that depends only on the first \(n\) relevant observations from the modified arm.
This is exactly the scale needed for the exploration guarantee below. More broadly, Lemma~\ref{lemma:localized_change_of_measure} applies whenever one compares two adaptive sampling environments that differ only through a single data stream and the event of interest is controlled by a bounded pull budget on that stream.
Thus its relevance is not restricted to the present Bernoulli hard family.

\begin{remark}[Why the event-restricted form matters]
\label{rem:replace_kl_by_local_chisq}
Classical transcript-level change-of-measure arguments compare the \emph{full} laws and yield, for example,
\[
\KL(\P_0\|\P_1)=\E_0[N_j(T)]\,\KL(P\|Q),
\qquad
\KL(\P_1\|\P_0)=\E_1[N_j(T)]\,\KL(Q\|P),
\]
and analogous bounds for other global divergences
\citep{bretagnolle1979estimation,tsybakov2008nonparametric,lattimore2020banditalgorithms}.
These quantities pay for all pulls of arm \(j\) under one of the two environments.

For under-sampling events such as \(\{N_j(T)\le n\}\), this can be much larger than the relevant event-level budget.
By contrast, Lemma~\ref{lemma:localized_change_of_measure} and Corollary~\ref{cor:blackbox_local_chisq}
pay only for the first \(n\) rewards of arm \(j\) that are relevant to the event under consideration,
independently of how often arm \(j\) may be pulled on typical trajectories under either environment.
This event-level alignment is precisely what is needed in the application below to turn regret information
under a perturbed instance into a probability-level exploration guarantee.
\end{remark}

\subsubsection{Proof of Lemma~\ref{lem:per_good_arm_exploration}}

\begin{proof}
It suffices to verify the claim for one admissible choice of constants.
Take
\[
c_\Delta=8,\qquad \alpha=\frac{\ln(1.8)}{512},\qquad \beta=0.1.
\]
Fix $S\in\mathcal S_{K/2}.$ and $j\in S$, and define
\[
\Delta_0 := c_\Delta C\log(T)\log(K)\sqrt{\frac{K}{T}},
\qquad
n:=\left\lfloor \alpha \frac{T}{C^2K\log^2(T)\log^2(K)}\right\rfloor.
\]
Define the instance $\nu_S^{(j)}$ by changing only arm $j$ to mean
$\mu_j=1/2+\Delta_0$, keeping all other arms identical to $\nu_S$.
Let $E_j:=\{N_j(T)<n\}$.

Under $\nu_S^{(j)}$, arm $j$ is the unique optimal arm with mean $1/2+\Delta_0$.
For any $i\neq j$, we have $\mu_i\le 1/2$, hence each round not pulling $j$ incurs
instantaneous expected regret at least $\Delta_0$.
Therefore,
\[
\Reg_T(\nu_S^{(j)}) \ \ge\ \Delta_0\big(T-N_j(T)\big).
\]
On $E_j$, $T-N_j(T)\ge T-n\ge T/2$, so
\[
\E_{\nu_S^{(j)}}[\Reg_T]
\ \ge\ \E_{\nu_S^{(j)}}\!\big[\Delta_0(T-N_j(T))\ind_{E_j}\big]
\ \ge\ \frac{\Delta_0 T}{2}\,\P_{\nu_S^{(j)}}(E_j).
\]
By the regret bound \eqref{eq:reg-def}, $\E_{\nu_S^{(j)}}[\Reg_T]\le C\log(T)\log(K)\sqrt{KT}$, hence
\[
\P_{\nu_S^{(j)}}(E_j)
\ \le\ \frac{2C\sqrt{KT}\log(T)\log(K)}{\Delta_0 T}
\ =\ \frac14.
\]

The environments $\nu_S$ and $\nu_S^{(j)}$ differ only at arm $j$.
At arm $j$, the reward law is $P=\mathrm{Ber}(1/2)$ under $\nu_S$ and
$Q=\mathrm{Ber}(1/2+\Delta_0)$ under $\nu_S^{(j)}$.
For $|\Delta_0|\le 1/4$,
\[
\ChiSq(P\|Q)
=\frac{4\Delta_0^2}{1-4\Delta_0^2}
\le 16\Delta_0^2.
\]
Since $E_j\subseteq\{N_j(T)\le n\}$, Lemma~\ref{lemma:localized_change_of_measure} gives
\[
\P_{\nu_S}(E_j)
\le \sqrt{\P_{\nu_S^{(j)}}(E_j)}\exp\!\Big(\frac n2\ChiSq(P\|Q)\Big)
\le \frac12 \exp(8n\Delta_0^2).
\]
By the definition of $n$ and $\Delta_0$, we have
\[
8n\Delta_0^2
\le
8\Bigl(\alpha\frac{T}{C^2K\log^2(T)\log^2(K)}\Bigr)
\Bigl(64C^2\log^2(T)\log^2(K)\frac{K}{T}\Bigr)
=
512\alpha
=
\ln(1.8).
\]
Thus
\[
\P_{\nu_S}(E_j)
\le
\frac12 \exp(8n\Delta_0^2)
\le
\frac12 e^{\ln(1.8)}
=0.9.
\]
Equivalently,
\[
\P_{\nu_S}\bigl(N_j(T)\ge n\bigr)\ge 0.1=\beta.
\]
\end{proof}

\subsection{Information Lower Bound under the Hard Prior}
\label{subsec:info_lower_bound}

We now formalize Step~(B) of the proof and show that the thresholded sampling profile $Y$
must reveal linear information about the hidden set $S^\star$.
Write
\[
X_i:=\ind\{i\in S^\star\}, \qquad X=(X_1,\dots,X_K).
\]
The argument combines the false-negative control from Step~(A) with a regret-based false-positive bound,
and then applies a per-coordinate Fano-type inequality to obtain
\[
I(S^\star;Y)=\Omega(K).
\]

The proof packages Step~(A) and the regret bound into a coordinatewise classification statement:
the thresholded bit $Y_i$ predicts the hidden membership bit $X_i=\ind\{i\in S^\star\}$ with average error bounded away from $1/2$.
A chain-rule / per-coordinate entropy argument then turns this coordinatewise signal into a linear mutual-information lower bound.

\begin{theorem}[Information acquisition lower bound]
\label{thm:info_lower_bound_rigorous}
In the setting of Definition~\ref{def:hard_family_profile}, assume that
algorithm $\mathcal A$ satisfies the regret bound \eqref{eq:reg-def} for some
constant $C>0$, let $c_\Delta,\alpha,\beta>0$ be the constants from
Lemma~\ref{lem:per_good_arm_exploration}, and define
\[
n:=\left\lfloor \alpha \frac{T}{C^2K\log^2(T)\log^2(K)}\right\rfloor.
\]
Assume
\begin{equation}
\label{eq:fp_assumption}
\frac{C\sqrt{KT}\log(T)\log(K)}{n}\le \frac{\beta K}{8},
\end{equation}
and the standing requirements of Lemma~\ref{lem:per_good_arm_exploration}.
Then
\[
I(S^\star;Y)
\ \ge\
\Bigl(1-H_b\bigl(\tfrac12-\tfrac{\beta}{4}\bigr)\Bigr)K
-\frac12\log_2(2K),
\]
where $H_b(p):=-p\log_2 p-(1-p)\log_2(1-p)$ is the binary entropy.
In particular, for all sufficiently large $K$, there exists a universal constant $\gamma>0$ such that
\[
I(S^\star;Y)\ge \gamma K.
\]
\end{theorem}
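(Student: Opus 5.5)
We establish the theorem by bounding $H(S^\star\mid Y)$ from above, since $I(S^\star;Y)=H(S^\star)-H(S^\star\mid Y)$. The entropy term is explicit: $H(S^\star)=\log_2\binom{K}{K/2}\ge K-\tfrac12\log_2(2K)$, by the standard central-binomial estimate $\binom{K}{K/2}\ge 2^K/\sqrt{2K}$. This is the source of the $-\tfrac12\log_2(2K)$ term. For the conditional term, since $S^\star$ and $X$ determine each other, subadditivity gives $H(S^\star\mid Y)=H(X\mid Y)\le\sum_i H(X_i\mid Y)\le \sum_i H(X_i\mid Y_i)$. Each summand is at most $H_b(p_i)$ by Fano's inequality for binary variables, where $p_i:=\P(Y_i\ne X_i)$ is the error of the naive predictor $\hat X_i=Y_i$. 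Concavity of $H_b$ then gives $\sum_i H_b(p_i)\le K\,H_b(\bar p)$ with $\bar p:=\frac1K\sum_i p_i$. So it remains to show $\bar p\le \tfrac12-\tfrac\beta4$; monotonicity of $H_b$ on $[0,\tfrac12]$ then finishes the argument.

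To bound $\bar p$, we split $K\bar p=\E[\#\{\text{false negatives}\}]+\E[\#\{\text{false positives}\}]$, working under the mixture law with $S^\star$ uniform.

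For false negatives, we condition on $S^\star=S$ and apply Lemma~\ref{lem:per_good_arm_exploration} to each $j\in S$. This gives $\P_{\nu_S}(Y_j=0)\le 1-\beta$, so the expected number of false negatives is at most $(1-\beta)K/2$.

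For false positives, each bad arm $i\notin S$ with $Y_i=1$ has been pulled at least $n$ times at cost $1/2$ each. Hence $\Reg_T\ge \tfrac n2\,\#\{i\notin S: Y_i=1\}$. Taking expectations and using \eqref{eq:reg-def}, the expected number of false positives is at most $2C\sqrt{KT}\log T\log K/n$, which by \eqref{eq:fp_assumption} is at most $\beta K/4$.

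Summing the two contributions gives $K\bar p\le K(\tfrac12-\tfrac\beta2+\tfrac\beta4)=K(\tfrac12-\tfrac\beta4)$, as required.

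Putting the pieces together,
\[
I(S^\star;Y)\ge K-\tfrac12\log_2(2K)-K\,H_b\bigl(\tfrac12-\tfrac\beta4\bigr).
\]
The second claim follows with $\gamma:=\tfrac12\bigl(1-H_b(\tfrac12-\tfrac\beta4)\bigr)>0$, because the $\tfrac12\log_2(2K)$ term is eventually below $\gamma K$.

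The delicate points are bookkeeping rather than conceptual. One must check that the conditioning and chain-rule steps are legitimate under the mixture law. One must also check the constant in the false-positive count (factor $1/2$ from the gap), which is why assumption \eqref{eq:fp_assumption} is phrased with $\beta K/8$, leaving slack. The main substantive input, namely the probability-level guarantee for good arms, is already supplied by Lemma~\ref{lem:per_good_arm_exploration}, so I expect no real obstacle beyond verifying the binomial entropy bound.
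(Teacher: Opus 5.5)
Your proposal is correct and follows the paper's proof essentially step for step: the prior-entropy bound, subadditivity down to $H(X_i\mid Y_i)$, binary Fano plus concavity and monotonicity of $H_b$, and the false-positive/false-negative split controlled by regret and Lemma~\ref{lem:per_good_arm_exploration} respectively. The only cosmetic difference is that you apply Fano directly to the naive predictor's error $\P(Y_i\neq X_i)$ instead of passing through the Bayes error $p_i^\star$. That is equally valid, since binary Fano holds for any estimator.
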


\begin{remark}[On the horizon regime]
\label{rem:horizon_regime_lower_bound}
For a sufficiently large absolute constant $c$, the condition
\[
T \ge c\, C^6 K\log^6 T\log^6 K
\]
implies both \eqref{eq:fp_assumption} and the standing requirements of
Lemma~\ref{lem:per_good_arm_exploration} (namely $\Delta_0\le 1/4$ and $1\le n\le T/2$).
Hence Theorem~\ref{thm:info_lower_bound_rigorous} applies throughout this regime.
We do not claim the exponent $6$ is intrinsic.

A large-horizon assumption is unavoidable for any lower bound of this form under the target regret rate
$O(\sqrt{KT}\log T\log K)$:
the one-batch algorithm that pulls a fixed arm throughout has regret at most $T$,
and therefore already meets the minimax regret whenever
\[
T \lesssim K\log^2 T\log^2 K.
\]
Thus no statement of the form $B=\Omega(K/W)$ can hold uniformly over all $T$.

The stronger $\log^6$ condition comes from our present proof template.
Specifically, the necessary-exploration step uses the threshold
\[
n \asymp \frac{T}{C^2K\log^2 T\log^2 K},
\]
while the information argument requires the false-positive contribution
\[
\E[\mathrm{FP}] \lesssim \frac{\sqrt{KT}\log T\log K}{n}
\]
to be at most order $K$.
Combining the two yields the displayed sufficient condition.
Optimizing these polylogarithmic exponents is left open.
\end{remark}
\begin{proof}
Define the membership bits $X_i:=\ind\{i\in S^\star\}$ and $X=(X_1,\dots,X_K)$.
Since $S^\star$ and $X$ are in bijection, $H(S^\star\mid Y)=H(X\mid Y)$ and $H(S^\star)=H(X)$.

\paragraph{Bound $H(S^\star)$.}
Since $S^\star$ is uniform over $\binom{K}{K/2}$ subsets,
\[
H(S^\star)=\log_2 \binom{K}{K/2}.
\]
Using $\binom{K}{K/2}\ge \frac{2^K}{\sqrt{2K}}$ gives
\begin{equation}
\label{eq:prior_entropy_lb}
H(S^\star)\ge K-\frac12\log_2(2K).
\end{equation}

\paragraph{Bound $H(S^\star|Y)$}
By the chain rule and the fact that conditioning reduces entropy,
\[
H(X\mid Y)
= \sum_{i=1}^K H(X_i \mid X_{1:i-1},Y)
\le \sum_{i=1}^K H(X_i\mid Y)
\le \sum_{i=1}^K H(X_i\mid Y_i).
\]
For each $i$, let
\[
p_i^\star := \inf_{\psi:\{0,1\}\to\{0,1\}} \P\big(\psi(Y_i)\neq X_i\big)
\]
be the Bayes minimum error probability when predicting $X_i$ from $Y_i$ alone.
The (binary) Fano inequality yields
\[
H(X_i\mid Y_i)\le H_b(p_i^\star).
\]
By concavity of $H_b$,
\[
H(X\mid Y)\le \sum_{i=1}^K H_b(p_i^\star)
\le K H_b\!\Big(\frac1K\sum_{i=1}^K p_i^\star\Big).
\]

\paragraph{1. Upper bound the average Bayes error by the thresholding rule.}
Consider the particular (possibly suboptimal) estimator $\hat X_i:=Y_i$.
Then $p_i^\star \le \P(\hat X_i\neq X_i)=\P(Y_i\neq X_i)$, hence
\[
\frac1K\sum_{i=1}^K p_i^\star
\ \le\ \frac1K\sum_{i=1}^K \P(Y_i\neq X_i)
\ =:\ P_e,
\]
where
\[
KP_e=\E\Big[\sum_{i=1}^K \ind\{Y_i\neq X_i\}\Big]
=\E[\mathrm{FP}]+\E[\mathrm{FN}],
\]
with
\[
\mathrm{FP}:=\sum_{i\notin S^\star}\ind\{Y_i=1\},\qquad
\mathrm{FN}:=\sum_{i\in S^\star}\ind\{Y_i=0\}.
\]

\paragraph{2. Bound FP by regret.}
Fix $S\in\binom{[K]}{K/2}$ and work under $\nu_S$.
By Markov's inequality,
\[
\E_{\nu_S}[\mathrm{FP}]
=\sum_{i\notin S}\P_{\nu_S}(N_i(T)\ge n)
\le \frac1n\sum_{i\notin S}\E_{\nu_S}[N_i(T)].
\]
Under $\nu_S$, the optimal mean is $\mu^\star=1/2$ and any arm $i\notin S$ has mean $0$,
so pulling $i\notin S$ incurs instantaneous expected regret $1/2$, and pulling $i\in S$
incurs $0$.
Thus
\[
\E_{\nu_S}[\Reg_T]=\frac12\sum_{i\notin S}\E_{\nu_S}[N_i(T)],
\qquad\text{so}\qquad
\sum_{i\notin S}\E_{\nu_S}[N_i(T)]=2\E_{\nu_S}[\Reg_T].
\]
Using $\E_{\nu_S}[\Reg_T]\le C\sqrt{KT}\log(T)\log(K)$ gives
\[
\E_{\nu_S}[\mathrm{FP}] \le \frac{2C\sqrt{KT}\log(T)\log(K)}{n}.
\]
Since this bound holds uniformly for all $S\in\binom{[K]}{K/2}$, we can average over the random choice of $S^\star$:
\begin{equation}
\label{eq:FP_bound}
\E[\mathrm{FP}]
= \E_{S^\star}\big[\E[\mathrm{FP} \mid S^\star]\big]
\le \frac{2C\sqrt{KT}\log(T)\log(K)}{n}
\le \frac{\beta K}{4},
\end{equation}
where the last inequality follows from assumption \eqref{eq:fp_assumption}.

\paragraph{3. Bound FN by Lemma~\ref{lem:per_good_arm_exploration}.}
Fix $S\in\binom{[K]}{K/2}$ and work under $\nu_S$.
By Lemma~\ref{lem:per_good_arm_exploration}, for every $j\in S$,
$\P_{\nu_S}(Y_j=1)=\P_{\nu_S}(N_j(T)\ge n)\ge \beta$.
Hence the expected number of true positives satisfies
\[
\E_{\nu_S}\Big[\sum_{j\in S} Y_j\Big]
=\sum_{j\in S}\P_{\nu_S}(Y_j=1)
\ge \beta |S|=\frac{\beta K}{2}.
\]
Therefore
\[
\E_{\nu_S}[\mathrm{FN}]
=|S|-\E_{\nu_S}\Big[\sum_{j\in S}Y_j\Big]
\le \frac{K}{2}\,(1-\beta).
\]
Since this bound holds for any realization $S \in \binom{[K]}{K/2}$, we can obtain the unconditional expectation by marginalizing over $S^\star$:
\begin{equation}
\label{eq:FN_bound}
\E[\mathrm{FN}]
= \E_{S^\star}\big[ \E[\mathrm{FN} \mid S^\star] \big]
\le \frac{K}{2}(1-\beta).
\end{equation}

\paragraph{4. Conclude an upper bound on $H(S^\star\mid Y)$.}
Combining \eqref{eq:FP_bound} and \eqref{eq:FN_bound},
\[
P_e
=\frac{\E[\mathrm{FP}]+\E[\mathrm{FN}]}{K}
\le \frac{\beta}{4}+\frac{1-\beta}{2}
=\frac12-\frac{\beta}{4}
=:\delta < \frac12.
\]
Since $H_b$ is increasing on $[0,1/2]$,
\[
H(S^\star\mid Y)=H(X\mid Y)
\le K H_b\!\Big(\frac1K\sum_{i=1}^K p_i^\star\Big)
\le K H_b(P_e)
\le K H_b(\delta).
\]

\paragraph{Mutual information lower bound.}
Finally,
\[
I(S^\star;Y)=H(S^\star)-H(S^\star\mid Y)
\ge \Big(K-\frac12\log_2(2K)\Big) - K H_b(\delta),
\]
using \eqref{eq:prior_entropy_lb}.
This is exactly the claimed bound with $\delta=\frac12-\frac{\beta}{4}$.
The linear-in-$K$ conclusion follows because $1-H_b(\delta)>0$ for any $\delta\neq 1/2$.
\end{proof}

\paragraph{What remains for the lower bound.}
Theorem~\ref{thm:info_lower_bound_rigorous} completes the instance-level part of the argument:
under the regret guarantee \eqref{eq:reg-def}, even the coarse profile $Y$ must encode $\Omega(K)$ bits about the hidden set $S^\star$.
What remains is now purely structural.
Under batching and a $W$-bit persistent memory budget, how much instance-dependent information can $Y$ carry at all?
The next subsection answers this by showing that $Y$ is measurable with respect to the seed and the boundary-memory transcript, and therefore cannot contain more than $O(BW)$ bits of information.
\subsection{From Information to Batch Complexity}
\label{subsec:batch_complexity}

We now complete the lower bound by upper-bounding how much information the learner can encode in the thresholded sampling profile $Y$ under the batching and memory constraints.
Step~(B) showed that near-minimax regret forces
\[
I(S^\star;Y)=\Omega(K).
\]
To convert this into a lower bound on the number of batches, it remains to show that a $B$-batch algorithm with a $W$-bit persistent state can make $Y$ depend on at most $O(BW)$ bits of instance-dependent information.

The key structural fact is policy commitment: once a batch begins, the complete within-batch action plan is fixed using only the public time, the boundary memory state, and the seed $U$.
Hence rewards observed inside a batch can affect future action plans only through what is retained at later batch boundaries.
This yields the sharper entropy bound $(B-1)W$, and in particular the coarse bound $BW$ used in the roadmap.

The next lemma formalizes the structural consequence we need: although $Y$ is defined from the full pull counts $N_i(T)$, under the batching constraint it is in fact determined by the seed together with the internal boundary memory transcript.

\begin{lemma}[Boundary transcript determines the sampling profile]
\label{lem:Y_function_of_U_M}
Let $\mathcal A$ be a $(B,W)$-batched space-bounded algorithm as in
Section~\ref{subsec:memory_model}, and let $Y$ be the thresholded sampling profile from Definition~\ref{def:hard_family_profile}.
Then $Y$ is a deterministic measurable function of $(U,\mathbf M)$.
\end{lemma}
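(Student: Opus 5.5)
The plan is to show, by induction over batches, that the realized grid and every action are deterministic functions of $(U,\mathbf M)$. The counts $N_i(T)$, and therefore $Y$, then follow by counting.

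\textbf{Step 1: the grid.} Recall that $t_0=0$ and that $M_{t_0}=M_0$ is a fixed initial state. For each $b\in[B-1]$ the model gives $t_b=\psi_b(t_{b-1},M_{t_{b-1}},U)$. We argue by induction on $b$.
\begin{itemize}
\item Suppose $t_{b-1}$ is a function of $(U,M_{t_1},\dots,M_{t_{b-2}})$.
\item Then $t_b$ is a function of $(U,M_{t_1},\dots,M_{t_{b-1}})$.
\end{itemize}
Hence the whole realized grid $\tau=(t_1,\dots,t_{B-1})$ is a measurable function $g(U,\mathbf M)$, and $t_B=T$ is constant. There is a subtlety: $\mathbf M$ is indexed by random times. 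We therefore view $\mathbf M$ as the random tuple $(M_{t_1},\dots,M_{t_{B-1}})$, meaning the memory values observed at the realized boundaries. The recursion uses exactly these values, so no circularity arises. In the static-grid case $\tau$ depends on $U$ only, and the argument is trivial.

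\textbf{Step 2: the actions.} For every round $t$ there is a unique $b$ with $t_{b-1}<t\le t_b$. This $b$ is determined by the grid, hence by $(U,\mathbf M)$. The action is
\[
A_t=\rho_{b,t}(t_{b-1},M_{t_{b-1}},U),
\]
and its inputs are all functions of $(U,\mathbf M)$. Formally, we write
\[
A_t=\sum_{b=1}^{B}\ind\{t_{b-1}<t\le t_b\}\,\rho_{b,t}(t_{b-1},M_{t_{b-1}},U).
\]
Each indicator is a measurable function of the grid, and there are only finitely many possible grids and memory values. Hence each $A_t$ is a measurable function of $(U,\mathbf M)$. The within-batch rewards and intermediate states $M_t$ do not enter $A_t$, because $\rho_{b,t}$ depends only on boundary data.

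\textbf{Step 3: conclusion.} We have $N_i(T)=\sum_{t=1}^T\ind\{A_t=i\}$, and so $Y_i=\ind\{N_i(T)\ge n\}$ is a measurable function of $(U,\mathbf M)$.

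\textbf{Main obstacle.} The only delicate point is measurability with random boundary times. For the recursive composition to be well defined, $M_{t_{b-1}}$ must be read off from $\mathbf M$ at the already-determined index $b-1$, and the indexing must not depend on anything outside $(U,\mathbf M)$. The case split over the finitely many possible values of $(t_{b-1},M_{t_{b-1}})$ resolves this. We treat the first batch separately, since it uses the constant $M_0$.
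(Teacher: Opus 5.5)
Your proof is correct and follows essentially the same route as the paper. Policy commitment makes each batch's actions a function of $(t_{b-1},M_{t_{b-1}},U)$; the paper packages these into batch-level pull counts $N^{(b)}$, while you work action-by-action with the indicator decomposition, which is only a cosmetic difference. The realized grid $\tau$ is then recursively determined by $(U,\mathbf M)$ through the maps $\psi_b$, exactly as in the paper.
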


\begin{proof}[Proof Sketch]
The key point is policy commitment.
For each batch $b$, once $(t_{b-1},M_{t_{b-1}},U)$ is fixed, the entire within-batch action sequence is fixed as well.
Hence the pull-count vector contributed by batch $b$ is determined by the boundary information available when that batch starts.
Summing these batch-level pull counts shows that the full pull-count vector $(N_1(T),\dots,N_K(T))$, and therefore the thresholded profile $Y$, is determined by $(U,\tau,\mathbf M)$, where $\tau=(t_1,\dots,t_{B-1})$ is the realized internal grid.

In the static-grid model, $\tau$ is fixed in advance.
In the adaptive-grid model, each new boundary is chosen from the same boundary information, so $\tau$ is itself determined recursively by $(U,\mathbf M)$.
Therefore $Y$ is a deterministic function of $(U,\mathbf M)$.
The formal measurability verification is deferred to Appendix~\ref{app:boundary_transcript_measurability}.
\end{proof}

\paragraph{Proof of Theorem~\ref{thm:batch_lower_bound_rigorous}}
We are now in position to complete the proof.
Step~(B) lower-bounds $I(S^\star;Y)$ by $\Omega(K)$, while Step~(C) upper-bounds the same quantity by the batch-memory capacity.
Combining the two yields the claimed lower bound on $B$.
\begin{proof}
By Remark~\ref{rem:horizon_regime_lower_bound}, for a sufficiently large universal constant $c$,
the assumption
\[
T \ge c\, C^6 K\log^6 T\log^6 K
\]
implies the hypotheses of Theorem~\ref{thm:info_lower_bound_rigorous} for the explicit admissible choice
\[
c_\Delta=8,\qquad \alpha=\frac{\ln(1.8)}{512},\qquad \beta=0.1.
\]
Hence Theorem~\ref{thm:info_lower_bound_rigorous} yields
\[
I(S^\star;Y)
\ \ge\
\Bigl(1-H_b\bigl(\tfrac12-\tfrac{0.1}{4}\bigr)\Bigr)K
-\frac12\log_2(2K)
=
\gamma_0 K-\frac12\log_2(2K).
\]

By Lemma~\ref{lem:Y_function_of_U_M}, there exists a deterministic and measurable function $g$ such that
\[
Y=g(U,\mathbf M)
\qquad\text{a.s.}
\]
Hence, conditional on $U$, we have the Markov chain
\[
S^\star \ \longrightarrow\ \mathbf M \ \longrightarrow\ Y.
\]
By the conditional data processing inequality,
\[
I(S^\star;Y\mid U)\le I(S^\star;\mathbf M\mid U).
\]
Since $U$ is independent of $S^\star$,
\[
I(S^\star;Y)\le I(S^\star;Y,U)=I(S^\star;Y\mid U).
\]
Therefore
\[
I(S^\star;Y)\le I(S^\star;\mathbf M\mid U)\le H(\mathbf M\mid U)\le H(\mathbf M).
\]

Using the chain rule and $|\mathcal M|\le 2^W$,
\[
H(\mathbf M)
:=\sum_{b=1}^{B-1} H\!\bigl(M_{t_b}\mid M_{t_1},\dots,M_{t_{b-1}}\bigr)
\le \sum_{b=1}^{B-1} H(M_{t_b})
\le \sum_{b=1}^{B-1} \log_2 |\mathcal M|
\le (B-1)W
\le BW.
\]
Thus
\[
I(S^\star;Y)\le BW.
\]

Combining the lower and upper bounds on $I(S^\star;Y)$ gives
\[
\gamma_0 K-\frac12\log_2(2K)\le BW,
\]
which implies
\[
B
\ \ge\
\frac{\gamma_0 K - \frac12\log_2(2K)}{W}.
\]

Since $\gamma_0>0$ is universal, there exists a universal constant $\gamma>0$ such that
for all sufficiently large $K$,
\[
\gamma_0 K-\frac12\log_2(2K)\ge \gamma K.
\]
Hence
\[
B\ge \frac{\gamma K}{W},
\]
which proves the asymptotic lower bound.
The specialization to $W=O(\log T)$ is immediate.
\end{proof}

\paragraph{Interpretation.}
Theorem~\ref{thm:batch_lower_bound_rigorous} formalizes an adaptivity--memory tradeoff:
achieving minimax regret requires learning $\Omega(K)$ bits about which arms are good,
while a $B$-batch algorithm with $W$-bit persistent memory can transport at most $BW$ bits of instance-dependent information overall across batches.
Therefore $BW$ must be at least on the order of $K$.
In particular, if $W=O(\log T)$, then any such algorithm needs
$B=\Omega(K/\log T)$ batches, i.e., a nearly linear number of adaptivity rounds in $K$
when memory is logarithmic.

\section{Upper Bound under Simultaneous Space and Adaptivity Constraints}
\label{sec:upper_bound}

In this section we present a \emph{batched} and \emph{space-bounded} algorithm for the
stochastic $K$-armed bandit problem with horizon $T$ and rewards in $[0,1]$.
Throughout the paper, we use $W$ to denote the persistent-memory budget measured in bits.
To avoid overloading this notation, in the upper-bound construction we use an integer block-size parameter $S\in [K]$.
The algorithm uses $O(S\log T)$ bits of memory, at most $\widetilde{O}(K/S)$ batches, and achieves expected regret $\widetilde{O}(\sqrt{KT})$.

Equivalently, for any bit budget $W$ with $\Omega(\log T) \le W \le O(K\log T)$, choosing $S\asymp W/\log T$ yields an implementation using at most $W$ bits and $\widetilde{O}(K/W)$ batches.
Here the $\Omega(\log T) \le W$ lower bound is needed to represent the relevant counters and empirical means to inverse-polynomial precision in $T$, while the $W \le O(K\log T)$ upper bound is without loss of generality since standard finite-armed bandit algorithms only need to store per-arm summaries for the $K$ arms.
This nearly matches the lower bound of Section~\ref{sec:lower_bound} up to logarithmic factors.

Our algorithm works under the \emph{static-grid} batched model.
Since the static-grid model is algorithmically weaker than the \emph{adaptive-grid} model, the same upper bound immediately applies to the adaptive-grid setting as well.

\subsection{Algorithm Description}

At a high level, our algorithm may be viewed as a space-bounded analogue of a standard batched elimination algorithm \citep{perchet2016batched, gao2019batched,jin2021anytimebatched}.
The main difficulty is that standard batched elimination algorithms keep a large active set of candidate arms across stages, which is impossible under a strict memory budget.

Our solution is to keep only one incumbent together with one block of at most $S$ challengers at a time.
Instead of storing all potentially good arms simultaneously, the algorithm scans the remaining arms block by block.
Within each block, challengers are compared against a frozen benchmark incumbent through a multistage test: if a challenger is confidently worse, it is deactivated for the rest of the block; if it survives all stages and appears better, it may trigger an incumbent update at the end of the block.
Because only one challenger block is kept in memory at a time, this design uses $O(S\log T)$ bits while reducing the number of batches by a factor of $S$ relative to the fully sequential scan.

Concretely, the algorithm has three nested loops.
The outer loop uses a deterministic grid of lengths $t_1,\dots,t_L$ defined by
\[
t_0 \triangleq 1,
\qquad
t_i \triangleq \left\lceil \sqrt{t_{i-1}\cdot \frac{T}{10K}} \right\rceil
\qquad (i\ge 1),
\]
and
\[
L \triangleq \left\lfloor \log_2\log_2\!\left(\frac{T}{10K}\right)\right\rfloor.
\]
Readers familiar with batched bandit literature will recognize that this is the standard ``minimax''-type grid~\citep{perchet2016batched, gao2019batched}.

In each outer iteration $i$, the algorithm maintains a (time-varying) incumbent arm $a_i^\star$ and its empirical mean $\widehat{\mu}_i^\star$.
At the beginning of the iteration, it inherits the final incumbent from the previous outer iteration and denotes this fixed initial identity by
\[
a_{i,\mathrm{init}} \triangleq a_{i-1}^\star.
\]
The algorithm sets the current incumbent variable to $a_i^\star \leftarrow a_{i,\mathrm{init}}$, pulls this arm for $t_i$ rounds, and stores the resulting empirical mean as $\widehat{\mu}_{i}^{\mathrm{init}}$.
It then initializes the incumbent summary by
\[
\widehat{\mu}_i^\star \leftarrow \widehat{\mu}_{i}^{\mathrm{init}}.
\]
Throughout outer iteration $i$, the variable $a_i^\star$ denotes the current incumbent, and $\widehat{\mu}_i^\star$ denotes the stored summary of the arm currently held in $a_i^\star$.
The identity $a_{i,\mathrm{init}}$ stays fixed while the challenger blocks are scanned, even though $a_i^\star$ may later be replaced.

Next, the algorithm partitions
\[
[K]\setminus\{a_{i,\mathrm{init}}\}
\]
into
\[
J \triangleq \left\lceil \frac{K-1}{S}\right\rceil
\]
ordered blocks
\[
C_{i,1},\dots,C_{i,J},
\qquad |C_{i,j}|\le S.
\]
For each block $C_{i,j}$, the algorithm freezes the current incumbent as the benchmark
\[
\bar a_{i,j}\triangleq a_i^\star,
\qquad
\bar \mu_{i,j}\triangleq \widehat{\mu}_i^\star.
\]
It then preallocates $i$ comparison slots indexed by $k=1,\dots,i$, where slot $k$ contains $t_k$ pulls for each arm-position in the block.
If arm $a\in C_{i,j}$ is still active, then slot $k$ is used to pull $a$ and compute $\widehat{\mu}_{i,j,k}(a)$ from $t_k$ fresh samples.
Once $a$ is deactivated, its reserved pulls in the remaining slots are still executed but are reassigned to the frozen benchmark arm $\bar a_{i,j}$.
Thus, for each pair $(i,j)$, the slots are fixed in advance; adaptivity changes only which challengers remain active inside the block, not the batch boundaries.

We use the confidence radius
\[
c(s) \triangleq \sqrt{\frac{\log(2/\delta)}{2s}}
\qquad (s\ge 1).
\]
If, at level $k$, the challenger estimate satisfies
\[
\widehat{\mu}_{i,j,k}(a) < \bar \mu_{i,j} - 2c(t_k),
\]
we mark $a$ as inactive and fill all later slots reserved for this arm in block $(i,j)$ with the frozen benchmark arm.
Crucially, for a fixed slot $k$, the active/inactive status of every arm-position is already determined at the \emph{start} of that slot from the outcomes of the previous slots $1,\dots,k-1$.
Thus the loop over $a\in C_{i,j}$ inside slot $k$ does not require any within-slot feedback: all pulls assigned to slot $k$ can be scheduled simultaneously in one batch, and only after that batch is completed do we update the active set for slot $k+1$.
After level $k=i$, let $\tilde a_{i,j}$ be an active arm in $C_{i,j}$ with the largest value of $\widehat{\mu}_{i,j,i}(a)$, if such an arm exists.
If $\tilde a_{i,j}$ exists and moreover satisfies
\[
\widehat{\mu}_{i,j,i}(\tilde a_{i,j})\ge \bar \mu_{i,j} + 2c(t_i),
\]
then $\tilde a_{i,j}$ replaces the incumbent and we update the stored incumbent summary to $\widehat{\mu}_{i,j,i}(\tilde a_{i,j})$.
After the main outer-loop schedule $i=1,\dots,L$ is completed, all remaining rounds are assigned to the final incumbent $a_L^\star$ in one final batch.

Our algorithm is summarized in Algorithm~\ref{alg:finite_arm_upper}.

\begin{algorithm}
\caption{Simultaneously Batched and Space Constrained Algorithm for standard Multi-Armed Bandit with block size $S$}
\label{alg:finite_arm_upper}
\begin{algorithmic}[1]
\STATE Define $c(s) = \sqrt{\frac{\log(2/\delta)}{2s}}$
\STATE Initialize $a_0^\star \leftarrow 1$ and $t_0 \leftarrow 1$
\STATE Define $t_i \leftarrow \left\lceil \sqrt{t_{i-1}\frac{T}{10K}} \right\rceil$ for $i\ge 1$
\STATE Set $L \leftarrow \left\lfloor \log_2\log_2\!\left(\frac{T}{10K}\right)\right\rfloor$
\FOR{$i = 1$ to $L$}
    \STATE Set $a_{i,\mathrm{init}} \leftarrow a_{i-1}^\star$
    \STATE Set $a_i^\star \leftarrow a_{i,\mathrm{init}}$
    \STATE Pull arm $a_{i,\mathrm{init}}$ for $t_i$ rounds and store its empirical mean $\widehat{\mu}_{i}^{\mathrm{init}}$
    \STATE Set $\widehat{\mu}_i^\star \leftarrow \widehat{\mu}_{i}^{\mathrm{init}}$
    \STATE Set $J \leftarrow \left\lceil \frac{K-1}{S}\right\rceil$ and partition $[K]\setminus\{a_{i,\mathrm{init}}\}$ into ordered blocks $C_{i,1},\dots,C_{i,J}$ of size at most $S$
    \FOR{$j = 1$ to $J$}
        \STATE Set $\bar a_{i,j} \leftarrow a_i^\star$ and $\bar \mu_{i,j} \leftarrow \widehat{\mu}_i^\star$
        \FOR{each $a\in C_{i,j}$}
            \STATE Set $\mathrm{active}(a)\leftarrow \textbf{true}$
        \ENDFOR
        \FOR{$k = 1$ to $i$}
            \FOR{each $a\in C_{i,j}$}
                \IF{$\mathrm{active}(a)$}
                    \STATE Pull arm $a$ for $t_k$ rounds and compute its empirical mean $\widehat{\mu}_{i,j,k}(a)$
                    \IF{$\widehat{\mu}_{i,j,k}(a) < \bar \mu_{i,j} - 2c(t_k)$}
                        \STATE Set $\mathrm{active}(a) \leftarrow \textbf{false}$
                    \ENDIF
                \ELSE
                    \STATE Pull arm $\bar a_{i,j}$ for $t_k$ rounds
                \ENDIF
            \ENDFOR
        \ENDFOR
        \IF{there exists some active arm in $C_{i,j}$}
            \STATE Let $\tilde a_{i,j} \in \arg\max_{a\in C_{i,j}:\,\mathrm{active}(a)} \widehat{\mu}_{i,j,i}(a)$
            \IF{$\widehat{\mu}_{i,j,i}(\tilde a_{i,j}) \ge \bar \mu_{i,j} + 2c(t_i)$}
                \STATE Set $a_i^\star \leftarrow \tilde a_{i,j}$
                \STATE Set $\widehat{\mu}_i^\star \leftarrow \widehat{\mu}_{i,j,i}(\tilde a_{i,j})$
            \ENDIF
        \ENDIF
    \ENDFOR
\ENDFOR
\STATE Pull arm $a_L^\star$ for the final exploitation batch
\end{algorithmic}
\end{algorithm}

\begin{remark}
An outer iteration is \emph{not} a single batch.
Each incumbent-refresh block and each preallocated block-level comparison slot indexed by $(i,j,k)$ is a batch.
Because the schedule is deterministic, the number of rounds used before the final exploitation batch is also deterministic, so the length of the last batch is known in advance.
\end{remark}

\subsection{Algorithm Analysis}

We have the following theorem.
\begin{reptheorem}[Formal]{thm:informal-upper}
    \label{thm:finite_arm_upper}
    There exist universal constants $c,C_1,C_2,C_3>0$ such that the following holds.
    For every integer $K\ge 2$, every block size $S\in [K]$, and every horizon $T\ge cK$, if Algorithm~\ref{alg:finite_arm_upper} is run with $\delta = T^{-4}$, then it defines a static-grid stochastic bandit algorithm that uses no more than $C_1 S \log T$ bits of persistent memory, no more than $C_2 \frac{K}{S}\log^2\log T$ batches, and satisfies
    \[
    \sup_{\nu}\E_\nu[\Reg_T]
    \le C_3\sqrt{KT\log T}\,\log^2\log T,
    \]
    where the supremum is over all $K$-armed stochastic bandit instances with rewards in $[0,1]$.
\end{reptheorem}

Intuitively, the algorithm stores only one incumbent, one block of at most $S$ challengers, and a constant number of counters and empirical means per challenger in the current block, so the memory usage is $O(S\log T)$ bits.
The outer loop $i=1,\dots,L$ forms the main outer-loop schedule, the middle loop scans arm-blocks sequentially to avoid storing a large active set, and the inner loop allocates the prearranged comparison slots for the current block.
The sequential block scan contributes the factor $K/S$ in the batch complexity, while the multistage comparison contributes the additional $\log\log T$ factor.
Hence, for any bit budget $W$ with $\Omega(\log T) \le W \le O(K\log T)$, choosing $S\asymp W/\log T$ yields an implementation using at most $W$ bits and $\widetilde{O}(K/W)$ batches.

The total number of rounds used before the final exploitation batch is deterministic.
Indeed, once $t_1,\dots,t_L$ are fixed, the total number of rounds used before the final exploitation batch is
\begin{equation}
\label{eq:finite_arm_main_schedule_rounds}
N_{\mathrm{main}}
\triangleq
\sum_{i=1}^{L}
\left(
t_i + (K-1)\sum_{k=1}^{i} t_k
\right),
\end{equation}
because outer iteration $i$ contains one incumbent-refresh block of length $t_i$ and then scans the remaining $K-1$ arm-positions through challenger blocks, each position receiving $i$ preallocated slots of lengths $t_1,\dots,t_i$.
Hence the final batch length is exactly $T-N_{\mathrm{main}}$ and is known in advance.

We now prove the theorem in four steps: space, schedule and batch accounting, concentration, and regret.

\begin{lemma}[Space]
\label{lem:finite_arm_space}
Algorithm~\ref{alg:finite_arm_upper} uses only $O(S\log T)$ bits of memory.
\end{lemma}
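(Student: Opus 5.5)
The plan is to enumerate every variable that must persist across rounds, bound the number of bits each one needs, and check that the total is $O(S\log T)$. The point is to separate what is genuinely stateful from what is either public or recomputable. The grid $(t_k)_{k\le L}$, $L$, $J$, and $c(\cdot)$ are deterministic functions of $(T,K,S,\delta)$. The round index is public and not counted. So we never store the grid; each $t_k$ is recomputed on demand from $T$ and $K$ by its recursion.

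The stored items are as follows.
\begin{itemize}
\item \emph{Loop counters.} These are $i\le L$, $j\le J\le K$, $k\le i$, the position of the current arm inside the block, and a within-slot pull counter bounded by $T$. Each takes $O(\log T)$ bits, using $K\le T$.
\item \emph{Incumbent data.} This is $a_{i,\mathrm{init}}$, $a_i^\star$, and $\bar a_{i,j}$ (each $O(\log K)$ bits), plus $\widehat\mu_i^\star$, $\bar\mu_{i,j}$, and the running sum for the initial refresh pull.
\item \emph{Per-challenger data for the current block.} For each of the at most $S$ arms: an active bit, the latest empirical mean $\widehat\mu_{i,j,k}(a)$ (which is all the final argmax and promotion test need), and a running reward sum for the current slot.
\end{itemize}
The block $C_{i,j}$ does not need to be stored explicitly. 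We fix a canonical partition, for instance the $j$-th group of $S$ consecutive indices of $[K]\setminus\{a_{i,\mathrm{init}}\}$ in increasing order. Then membership and the identity of the arm at each position are computable from $(j,a_{i,\mathrm{init}})$.

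The next step is precision. Every empirical mean is a ratio $R/s$ with $s\le T$. If rewards are Bernoulli or otherwise rounded to a grid of mesh $1/T$, the running sum is an integer (or a multiple of $1/T$) of size at most $T$, so it costs $O(\log T)$ bits. To remain fully general for $[0,1]$ rewards, we instead store means rounded to precision $T^{-2}$. A rounding error of order $T^{-1}$ after accumulation changes each comparison threshold by at most $O(T^{-1})$, which can be absorbed into the confidence radius $c(t_k)\ge \Omega(1/\sqrt T)$. Either way, each stored number uses $O(\log T)$ bits.

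Summing, we get $O(\log T)$ bits of global state plus $O(S\log T)$ bits for the block, which is $O(S\log T)$ since $S\ge1$.

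The main obstacle is the precision issue for general real-valued rewards. I expect the cleanest handling is to state that sums are stored to $O(\log T)$-bit fixed-point precision, and to note that this perturbs the concentration analysis used later only by an additive $O(1/T)$. Everything else is simple bookkeeping.
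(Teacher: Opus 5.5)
Your proposal is correct and follows essentially the same route as the paper. Both arguments list the persistent variables (loop indices, the incumbent and frozen benchmark with their summaries, and per-challenger active flags and empirical summaries for the current block), charge $O(\log T)$ bits to each, and observe that only $O(S)$ of them are alive at once. Your extra touches are harmless refinements of what the paper simply asserts, namely that inverse-polynomial precision suffices: recomputing the grid from $(T,K)$, using a canonical partition so that block identities need not be stored, and making the running sums and the $O(1/T)$ rounding error explicit.
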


\begin{proof}
At any moment, the algorithm maintains the current outer index $i$, the current block index $j$, the current comparison level $k$, the current scheduled length $t_i$, the fixed inherited initial arm $a_{i,\mathrm{init}}$, the current incumbent arm $a_i^\star$, one incumbent summary $\widehat{\mu}_i^\star$, the frozen benchmark pair $(\bar a_{i,j},\bar\mu_{i,j})$ for the current block, the identities of the at most $S$ arms in the current block, one active/inactive flag for each of these arms, and at most $S$ empirical summaries $\widehat{\mu}_{i,j,k}(a)$ for the arms currently processed in the block.

All indices and counters are integers bounded by $T$, and therefore each requires only $O(\log T)$ bits.
Each arm identity needs $O(\log K)\le O(\log T)$ bits, so the current block needs $O(S\log T)$ bits.
Each empirical summary is an average of at most $T$ rewards in $[0,1]$, and it suffices to store it to inverse-polynomial precision in $T$, which also requires $O(\log T)$ bits.
Since there are only $O(S)$ such summaries and flags alive simultaneously, the total persistent memory usage is $O(S\log T)$.
\end{proof}

\begin{lemma}[Outer-loop schedule and round budget]
\label{lem:finite_arm_schedule}
Let $N_{\mathrm{main}}$ be defined by~\eqref{eq:finite_arm_main_schedule_rounds}.
Let
\[
L \triangleq \left\lfloor \log_2\log_2\!\left(\frac{T}{10K}\right)\right\rfloor.
\]
The deterministic outer-loop schedule uses a constant fraction of the horizon:
\[
\frac{T}{40}\le N_{\mathrm{main}}\le \frac{3T}{5}.
\]
In particular, the final exploitation batch has deterministic length
\[
\frac{2T}{5}\le T-N_{\mathrm{main}}\le \frac{39T}{40}.
\]
\end{lemma}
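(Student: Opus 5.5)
Set $x:=T/(10K)$, so $t_0=1$, $t_i=\lceil\sqrt{t_{i-1}x}\,\rceil$ and $L=\lfloor\log_2\log_2 x\rfloor$. The plan is to compare $t_i$ with the ceiling-free recursion $s_i:=x^{1-2^{-i}}$, which satisfies $s_i=\sqrt{s_{i-1}x}$. We then read off the size of $N_{\mathrm{main}}$ from the choice of $L$. The quantity controlling everything is $x^{2^{-L}}$. From $2^L\le\log_2 x<2^{L+1}$ we get
\[
2\le x^{2^{-L}}<4 .
\]
Consequently $x/4< s_L\le x/2$. More generally, for $k\le L$,
\[
\frac{x}{s_k}=x^{2^{-k}}=\bigl(x^{2^{-L}}\bigr)^{2^{L-k}}\ge 2^{2^{L-k}},
\]
so $s_k$ decays doubly exponentially as $k$ moves down from $L$.

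\textbf{Lower bound.} Since ceilings only increase values, induction gives $t_i\ge s_i$. Keep only the $i=L$ term of \eqref{eq:finite_arm_main_schedule_rounds}, and inside its inner sum keep only $k=L$. This gives
\[
N_{\mathrm{main}}\ \ge\ t_L+(K-1)t_L\ =\ Kt_L\ \ge\ Ks_L\ \ge\ \frac{Kx}{4}\ =\ \frac{T}{40}.
\]

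\textbf{Upper bound.} Swap the order of summation:
\[
N_{\mathrm{main}}\le K\sum_{i=1}^L\sum_{k=1}^i t_k = K\sum_{k=1}^L (L-k+1)\,t_k .
\]
Substituting $m=L-k$, the bound on $s_k$ gives
\[
\sum_{k=1}^L (L-k+1)\,s_k\le x\sum_{m\ge0}(m+1)2^{-2^m}\le 1.3\,x .
\]
This would yield $N_{\mathrm{main}}\le 0.13\,T$, well below $3T/5$, leaving ample room for the ceiling errors.

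\textbf{Main obstacle.} The only real nuisance is controlling the ceilings in the upper bound. I would prove by induction that $t_i\le 2s_i$. The inductive step is
\[
t_i\le\sqrt{2s_{i-1}x}+1=\sqrt2\,s_i+1\le 2s_i,
\]
which holds once $s_i\ge 1/(2-\sqrt2)$. Since $s_i\ge s_1=\sqrt x$, this follows from $x\ge 3$, guaranteed by $T\ge cK$ with $c$ large. The base case is $t_1=\lceil\sqrt x\,\rceil\le 2\sqrt x$. This loses a factor $2$, giving $N_{\mathrm{main}}\le 0.26\,T\le 3T/5$. Finally, the bounds on the final batch length $T-N_{\mathrm{main}}$ follow by subtraction.
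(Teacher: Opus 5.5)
Your proof is correct and follows essentially the same route as the paper. Both use the smooth proxy $s_i=x^{1-2^{-i}}$, the inductive sandwich $s_i\le t_i\le 2s_i$, the estimate $x/4<s_L\le x/2$, and the same lower bound from the $i=k=L$ term; both implicitly need $L\ge 1$, i.e.\ $x\ge 4$, since otherwise $N_{\mathrm{main}}=0$.

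The only difference is the upper bound. The paper sums geometrically using $t_{i+1}\ge 2t_i$ for $i\le L-2$ and $t_L\ge\sqrt{2}\,t_{L-1}$, which bounds the double sum by $6t_L\le 6x$. You instead swap the order of summation and use $s_k\le x\,2^{-2^{L-k}}$ directly, which gives the sharper bound $N_{\mathrm{main}}\le 0.26\,T$.
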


\begin{proof}
Let
\[
H \triangleq \frac{T}{10K},
\qquad
s_i \triangleq H^{1-2^{-i}}
\qquad (i\ge 0).
\]
We first compare the integer schedule $(t_i)$ with the smooth proxy $(s_i)$.
We claim that for every $1\le i\le L$,
\begin{equation}
\label{eq:finite_arm_schedule_compare}
s_i \le t_i \le 2s_i.
\end{equation}
For the lower bound, use induction:
\[
t_i
=
\left\lceil \sqrt{Ht_{i-1}} \right\rceil
\ge
\sqrt{Ht_{i-1}}
\ge
\sqrt{Hs_{i-1}}
=
s_i.
\]
For the upper bound, the case $i=1$ is
\[
t_1
=
\lceil \sqrt{H}\rceil
\le
\sqrt{H}+1
\le
2\sqrt{H}
=
2s_1.
\]
Now suppose $i\ge 2$ and $t_{i-1}\le 2s_{i-1}$.
Since $L\ge 1$ implies $H\ge 4$, we have $s_i\ge s_1=\sqrt{H}\ge 2$, and therefore
\[
t_i
=
\left\lceil \sqrt{Ht_{i-1}} \right\rceil
\le
\sqrt{Ht_{i-1}}+1
\le
\sqrt{2Hs_{i-1}}+1
=
\sqrt{2}\,s_i+1
\le
2s_i.
\]
This proves~\eqref{eq:finite_arm_schedule_compare}.

Next, we estimate the last schedule level.
Since $L=\lfloor \log_2\log_2 H\rfloor$, we have
\[
\frac{1}{\log_2 H}\le 2^{-L}<\frac{2}{\log_2 H}.
\]
Hence
\[
\frac{H}{4}
=
H^{1-\frac{2}{\log_2 H}}
<
H^{1-2^{-L}}
=
s_L
\le
H^{1-\frac{1}{\log_2 H}}
=
\frac{H}{2}.
\]
Combining this with~\eqref{eq:finite_arm_schedule_compare} yields
\begin{equation}
\label{eq:finite_arm_schedule_last_level}
\frac{H}{4}< t_L \le H.
\end{equation}

We now control the partial sums.
If $1\le i\le L-2$, then
\[
2^{-i}\ge 2^{-(L-2)} = 4\cdot 2^{-L} \ge \frac{4}{\log_2 H},
\]
and therefore
\[
t_i \le 2H^{1-\frac{4}{\log_2 H}} = \frac{H}{8}
\qquad (1\le i\le L-2).
\]
Therefore, for every $1\le i\le L-2$,
\[
t_{i+1}
\ge
\sqrt{Ht_i}
\ge
2t_i.
\]
Also, since $t_{L-1}\le 2s_{L-1}<H/2$, we have
\[
t_L \ge \sqrt{Ht_{L-1}} \ge \sqrt{2}\,t_{L-1}.
\]
Consequently,
\[
\sum_{k=1}^{L-1} t_k \le 2t_{L-1},
\qquad
\sum_{k=1}^{L} t_k \le 2t_{L-1}+t_L \le (1+\sqrt{2})t_L,
\]
and thus
\[
\sum_{i=1}^{L}\sum_{k=1}^{i} t_k
\le
2\sum_{i=1}^{L-1} t_i + \sum_{k=1}^{L} t_k
\le
4t_{L-1} + (1+\sqrt{2})t_L
\le
(1+3\sqrt{2})t_L
<
6t_L.
\]
Using $t_i+(K-1)\sum_{k=1}^{i} t_k \le K\sum_{k=1}^{i} t_k$, we obtain
\[
N_{\mathrm{main}}
\le
K\sum_{i=1}^{L}\sum_{k=1}^{i} t_k
<
6Kt_L
\le
6KH
=
\frac{3T}{5}.
\]

For the lower bound, the last outer iteration alone contributes at least
\[
t_L + (K-1)t_L = Kt_L > \frac{KH}{4} = \frac{T}{40},
\]
where we used~\eqref{eq:finite_arm_schedule_last_level}.
Hence $N_{\mathrm{main}}\ge T/40$.
Therefore $N_{\mathrm{main}}\in [T/40,3T/5]$, and the final batch length is the predetermined quantity
\[
T-N_{\mathrm{main}}\in \left[\frac{2T}{5},\frac{39T}{40}\right].
\]
\end{proof}

\begin{lemma}[Number of batches]
\label{lem:finite_arm_batches}
Algorithm~\ref{alg:finite_arm_upper} uses at most $O\!\big(\frac{K}{S}\log^2\log T\big)$ batches.
\end{lemma}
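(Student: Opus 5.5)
The proof is a direct count of the batches the schedule creates. By the remark following Algorithm~\ref{alg:finite_arm_upper}, the batches are: one incumbent-refresh batch per outer iteration $i$; one batch per comparison slot $(i,j,k)$; and the single final exploitation batch. First I check that each slot really is one batch. For fixed $(i,j,k)$, the active/inactive status of every arm in $C_{i,j}$ is decided at the start of the slot from the outcomes of slots $1,\dots,k-1$. The benchmark $\bar a_{i,j}$ is frozen when the block begins. So every pull in slot $k$ can be committed in advance, and its length is the deterministic quantity $|C_{i,j}|\,t_k$. All batch lengths are therefore fixed in advance, and the grid is static.

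Counting then gives
\[
\#\text{batches}\;\le\; 1+\sum_{i=1}^{L}\bigl(1+J\cdot i\bigr)\;=\;1+L+J\,\frac{L(L+1)}{2}.
\]
Next I bound the two parameters. First, $J=\lceil (K-1)/S\rceil\le K/S+1\le 2K/S$, using $S\le K$. Second, $L=\lfloor\log_2\log_2(T/(10K))\rfloor\le\log_2\log_2 T=O(\log\log T)$. Substituting, the total is $O(1+\log\log T+(K/S)\log^2\log T)=O\!\left(\frac{K}{S}\log^2\log T\right)$, because $K/S\ge1$ and $\log^2\log T$ dominates $\log\log T$ once $T\ge cK$ makes $L\ge1$. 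I also treat the degenerate case $L\le 0$ separately: there only the final batch remains.

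The only step needing care is the claim that within-block adaptivity does not split a slot into several batches. This follows from the ordering of the active-set updates: deactivation tests for slot $k$ use only $\widehat\mu_{i,j,k}$, and those results are applied only from slot $k+1$ onward. A deactivated arm's reserved pulls go to the frozen $\bar a_{i,j}$, not to an arm chosen adaptively, so the slot's length and composition do not depend on within-slot outcomes. The incumbent update at the end of block $j$ happens between batches, so it creates no extra batch.
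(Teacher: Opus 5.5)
Your proposal is correct and follows the paper's proof: both count one incumbent-refresh batch and $J\cdot i$ comparison-slot batches per outer iteration plus the final exploitation batch, giving $\sum_{i=1}^{L}(1+Ji)+1=O\!\left(\frac{K}{S}L^2\right)$ with $L=O(\log\log T)$. Your additional checks (that each slot $(i,j,k)$ can be committed as a single batch, and $J\le 2K/S$) are details the paper leaves to its algorithm description and remark rather than stating in the proof.
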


\begin{proof}
In outer iteration $i$, the algorithm contains one incumbent-refresh block and
\[
J=\left\lceil \frac{K-1}{S}\right\rceil
\]
challenger blocks, each contributing at most $i$ preallocated comparison slots.
Thus outer iteration $i$ uses at most $1+Ji$ batches.
Including the final exploitation batch, the total number of batches is at most
\[
\sum_{i=1}^{L}\bigl(1+Ji\bigr)+1 = O\!\left(\frac{K}{S}L^2\right).
\]
Since  $L=O(\log\log T)$, we have 
$O\!\left(\frac{K}{S}L^2\right) = O\!\big(\frac{K}{S}\log^2\log T\big)$.
\end{proof}

It remains to bound the regret. We first state a uniform concentration event for the block-level challenger estimates and the incumbent-refresh estimates.

\begin{lemma}[Uniform concentration]
\label{lem:finite_arm_conc}
For every $(i,j,k,a)$ such that arm $a$ is sampled as a challenger at level $k$.
Let $\delta=T^{-4}$ and let $c(\cdot)$ be as defined in the algorithm description.
Then with probability at least $1-\frac{1}{T^2}$, the following event $E$ holds:
\[
E \triangleq \Big\{\forall i,j,k,\forall a\in C_{i,j}:\ |\widehat{\mu}_{i,j,k}(a)-\mu_a|\le c(t_k)\ \text{ and }\ |\widehat{\mu}^{\mathrm{init}}_i-\mu_{a_{i-1}^\star}|\le c(t_i)\Big\}.
\]
\end{lemma}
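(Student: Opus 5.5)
The plan is to apply Hoeffding's inequality to each individual estimate and then take a union bound over all estimates the algorithm forms. Since $\delta=T^{-4}$ is fixed in advance, this is straightforward. The one subtlety is adaptivity: which arm is sampled, and at which levels, depends on earlier outcomes. I would handle this with the standard stack-of-rewards (reward table) construction. For each arm $a$, let $(X_{a,\ell})_{\ell\ge1}$ be an i.i.d.\ sequence of rewards. I would reindex so that every estimate uses a fresh, disjoint segment of pulls. Concretely, for each arm $a$ and each \emph{possible} sampling occasion, namely an index $(i,j,k)$ with $a\in C_{i,j}$, or an incumbent refresh at outer iteration $i$, assign a dedicated block of $t_k$ (respectively $t_i$) rewards from an independent table $X^{(i,j,k)}_{a,1},\dots,X^{(i,j,k)}_{a,t_k}$. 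The algorithm then reads those rewards whenever it actually performs that sampling. The resulting law of the interaction is identical to the original one. Moreover, each fixed block average $\widehat\mu^{(i,j,k)}_a$ is an average of $t_k$ i.i.d.\ $[0,1]$-valued variables with mean $\mu_a$, independent of how the algorithm reaches that point.

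Fix one such table-indexed estimate. Hoeffding's inequality gives
\[
\P\bigl(|\widehat\mu^{(i,j,k)}_a-\mu_a|>c(t_k)\bigr)\le 2\exp\bigl(-2t_kc(t_k)^2\bigr)=\delta=T^{-4}.
\]
For the incumbent refresh, the arm $a_{i-1}^\star$ is random. I would therefore define the candidate estimates $\widehat\mu^{\mathrm{init},(i)}_a$ for every arm $a\in[K]$, and bound each one by $T^{-4}$ in the same way. The realized estimate $\widehat\mu_i^{\mathrm{init}}$ equals $\widehat\mu^{\mathrm{init},(i)}_{a_{i-1}^\star}$. So if all candidates concentrate, the realized one does too, whatever value $a_{i-1}^\star$ takes. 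The same argument covers $\widehat\mu_{i,j,k}(a)$: on the event that $a$ is active at level $k$, it coincides with the table quantity.

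It then remains to count the estimates. The table-indexed challenger estimates number at most $\sum_{i\le L}\sum_{j}\sum_{k\le i}|C_{i,j}|\le L^2K$, and the candidate refresh estimates number at most $LK$. Since $L\le\log_2\log_2 T$ and $K\le T$ (using $T\ge cK$), the total is at most $2L^2K\le T^2/2$ for $T$ larger than an absolute constant. A union bound then gives failure probability at most $2L^2K\,T^{-4}\le T^{-2}$. One could even count more crudely: each estimate uses at least one round, and distinct estimates that are actually formed use disjoint rounds. However, the candidate-table count above is the one I would use, because it avoids conditioning on random index sets.

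The main obstacle is making the adaptivity rigorous, namely that the random choice of which arms are active, and which arm is the incumbent, does not break Hoeffding's inequality. The reward-table coupling together with enumerating all possible (arm, occasion) pairs is the step that needs care. Everything else is a direct computation.
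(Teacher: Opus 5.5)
Your proposal is correct and follows the same route as the paper: apply Hoeffding with $\delta=T^{-4}$ to each empirical mean, then take a union bound over the $O(KL^2)$ estimates (the paper counts $L+\sum_i (K-1)i$ and bounds this by $T^2$). Your reward-table coupling and your enumeration of all $K$ candidate refresh estimates per iteration make rigorous the adaptivity that the paper passes over with ``for any fixed mean''. One small fix is needed: the blocks $C_{i,j}$ depend on the random $a_{i,\mathrm{init}}$, so index the challenger tables by $(i,k,a)$ rather than by $(i,j,k)$ with $a\in C_{i,j}$, which gives the same $L^2K$ count.
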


\begin{proof}
Each empirical mean is an average of independent $[0,1]$-valued rewards.
By Hoeffding's inequality, for any fixed mean based on $s$ samples we have
$\Pr\big(|\widehat{\mu}-\mu|>c(s)\big)\le \delta$.
The total number of empirical means computed is at most
\[
L+\sum_{i=1}^{L}(K-1)i = O(KL^2)=O\!\big(K\log^2\log T\big),
\]
since in outer iteration $i$, each of the $K-1$ challenger positions can produce at most one empirical mean at each of the $i$ levels.
Hence this total is at most $T^2$ whenever $T\ge cK$ for a sufficiently large universal constant $c$.
Applying a union bound over all computed means yields $\Pr(E)\ge 1-T^2\cdot \delta \ge 1-\frac{1}{T^2}$.
\end{proof}

\begin{lemma}[Accuracy of the current incumbent summary]
\label{lem:finite_arm_current_incumbent_summary}
Fix an outer iteration $i$ and work on the event $E$.
At every moment during outer iteration $i$, the stored summary $\widehat{\mu}_i^\star$ satisfies
\[
\bigl|\widehat{\mu}_i^\star-\mu_{a_i^\star}\bigr|\le c(t_i).
\]
In particular, for every challenger block $C_{i,j}$, its frozen benchmark summary satisfies
\[
\bigl|\bar\mu_{i,j}-\mu_{\bar a_{i,j}}\bigr|\le c(t_i).
\]
\end{lemma}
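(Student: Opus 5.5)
The plan is to argue by tracking the invariant through the sequence of assignments to $\widehat{\mu}_i^\star$ within outer iteration $i$. By inspection of Algorithm~\ref{alg:finite_arm_upper}, the pair $(a_i^\star,\widehat{\mu}_i^\star)$ is modified at exactly two kinds of moments. The first is the initialization, where $a_i^\star\leftarrow a_{i,\mathrm{init}}$ and $\widehat{\mu}_i^\star\leftarrow\widehat{\mu}_i^{\mathrm{init}}$. The second is the end of a challenger block $C_{i,j}$, where $a_i^\star\leftarrow\tilde a_{i,j}$ and $\widehat{\mu}_i^\star\leftarrow\widehat{\mu}_{i,j,i}(\tilde a_{i,j})$ simultaneously. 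Between such moments neither variable changes. It therefore suffices to check that the claimed bound holds immediately after each assignment; the invariant then persists at every moment by induction over the (finite, ordered) sequence of assignments.

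For the initialization, note that $a_{i,\mathrm{init}}=a_{i-1}^\star$ and $\widehat{\mu}_i^{\mathrm{init}}$ is the empirical mean of $t_i$ fresh pulls of this arm. On $E$, the second clause of Lemma~\ref{lem:finite_arm_conc} gives $|\widehat{\mu}_i^{\mathrm{init}}-\mu_{a_{i-1}^\star}|\le c(t_i)$, which is exactly $|\widehat{\mu}_i^\star-\mu_{a_i^\star}|\le c(t_i)$ right after initialization.

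For an update at the end of block $C_{i,j}$, the new incumbent $\tilde a_{i,j}$ is an arm of $C_{i,j}$ that remained active through all levels $k=1,\dots,i$. Hence it was pulled as a challenger at level $k=i$, and $\widehat{\mu}_{i,j,i}(\tilde a_{i,j})$ is an average of $t_i$ fresh samples. The first clause of $E$ with $k=i$ then gives $|\widehat{\mu}_{i,j,i}(\tilde a_{i,j})-\mu_{\tilde a_{i,j}}|\le c(t_i)$, so the invariant holds after the update. The only point requiring care, and the main (mild) obstacle, is that the radius is $c(t_i)$ rather than some $c(t_k)$ with $k<i$. This works because the update uses the level-$i$ estimate specifically, and the refresh uses $t_i$ samples. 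A further subtlety is that the random identity $\tilde a_{i,j}$ depends on data, but $E$ is quantified over all $(i,j,k,a)$ that are actually sampled, so no additional union bound is needed.

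Finally, the statement about the frozen benchmark follows immediately: $(\bar a_{i,j},\bar\mu_{i,j})$ is a copy of $(a_i^\star,\widehat{\mu}_i^\star)$ taken at the start of block $j$, where the invariant already holds. Therefore $|\bar\mu_{i,j}-\mu_{\bar a_{i,j}}|\le c(t_i)$.
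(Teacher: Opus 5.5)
Your proposal is correct and follows essentially the same route as the paper: both observe that $\widehat{\mu}_i^\star$ is only ever assigned either the refresh estimate $\widehat{\mu}_i^{\mathrm{init}}$ or a level-$i$ challenger estimate $\widehat{\mu}_{i,j,i}(\tilde a_{i,j})$. Each of these is based on $t_i$ samples and is covered by $E$, and the frozen benchmark $(\bar a_{i,j},\bar\mu_{i,j})$ is simply a copy of the current pair; your invariant-over-assignments framing spells out the paper's one-line observation in more detail.
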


\begin{proof}
During outer iteration $i$, the quantity $\widehat{\mu}_i^\star$ is initialized to
$\widehat{\mu}^{\mathrm{init}}_i$ and can only be updated at the end of a challenger block, when some active arm $\tilde a_{i,j}$ replaces the incumbent, in which case it is set to $\widehat{\mu}_{i,j,i}(\tilde a_{i,j})$.
Therefore $\widehat{\mu}_i^\star$ is always either the initial incumbent estimate $\widehat{\mu}^{\mathrm{init}}_i$ or one of the block-level challenger summaries $\widehat{\mu}_{i,j,i}(a)$.
Both are covered by the event $E$, and both are based on $t_i$ samples.
Hence $\widehat{\mu}_i^\star$ is always within $c(t_i)$ of the true mean of the current incumbent.
Since each frozen benchmark summary $\bar\mu_{i,j}$ is just a copy of $\widehat{\mu}_i^\star$ taken at the start of block $j$, the same bound holds for $(\bar a_{i,j},\bar\mu_{i,j})$.
\end{proof}

\begin{proof}[Proof of Theorem~\ref{thm:finite_arm_upper}]
The space bound follows from Lemma~\ref{lem:finite_arm_space}, the schedule and round-budget bounds from Lemma~\ref{lem:finite_arm_schedule}, and the batch bound from Lemma~\ref{lem:finite_arm_batches}.
It remains to prove the regret bound.
We condition on the event $E$ from Lemma~\ref{lem:finite_arm_conc}.
Within a fixed outer iteration, $a_i^\star$ denotes the current incumbent at the moment under discussion; at the end of the iteration it is the final incumbent of that iteration.
For each block $C_{i,j}$, let $(\bar a_{i,j},\bar\mu_{i,j})$ denote the benchmark frozen at the start of that block.

\paragraph{The best arm is never deactivated.}
Let $a^\star\in\arg\max_{a\in[K]}\mu_a$ denote a true best arm.
Fix an outer iteration $i$, let $j^\star$ be the block containing $a^\star$ if $a^\star\neq a_{i,\mathrm{init}}$, and suppose the algorithm tests $a^\star$ at some level $k\le i$ within that block.
The deactivation condition would be
\[
\widehat{\mu}_{i,j^\star,k}(a^\star) < \bar\mu_{i,j^\star} - 2c(t_k).
\]
Under $E$, we have
\[
\widehat{\mu}_{i,j^\star,k}(a^\star)\ge \mu^\star-c(t_k),
\qquad
\bar\mu_{i,j^\star} \le \mu_{\bar a_{i,j^\star}} + c(t_i)\le \mu^\star + c(t_i)\le \mu^\star + c(t_k),
\]
where the second inequality uses Lemma~\ref{lem:finite_arm_current_incumbent_summary}, and the last one uses the monotonicity of $c(\cdot)$.
Therefore
\[
\widehat{\mu}_{i,j^\star,k}(a^\star)\ge \bar\mu_{i,j^\star} - 2c(t_k),
\]
so the deactivation condition can never hold for $a^\star$.
Hence $a^\star$ always survives to level $k=i$ whenever it is tested in outer iteration $i$.

\paragraph{Incumbent true means do not decrease across block updates within an outer iteration.}
Fix an outer iteration $i$, a block $C_{i,j}$, and suppose its selected arm $\tilde a_{i,j}$ replaces the current incumbent.
Immediately before the update, by the update rule,
\[
\widehat{\mu}_{i,j,i}(\tilde a_{i,j}) \ge \bar\mu_{i,j} + 2c(t_i).
\]
Under $E$, we have
\[
\mu_{\tilde a_{i,j}} \ge \widehat{\mu}_{i,j,i}(\tilde a_{i,j})-c(t_i)
\ge \bar\mu_{i,j} + c(t_i)
\ge \mu_{\bar a_{i,j}},
\]
where the last inequality uses Lemma~\ref{lem:finite_arm_current_incumbent_summary}.
Thus every block update can only increase the true mean of the incumbent.
Consequently, every incumbent that appears during outer iteration $i$ has true mean at least that of the initial incumbent, which is $a_{i-1}^\star$.

\paragraph{Final incumbent gap bound.}
Fix an outer iteration $i$.
If $a^\star=a_{i,\mathrm{init}}$, then the incumbent is already optimal at the start of the iteration, and the previous paragraph implies that every later incumbent in this iteration is no worse.
Thus the final incumbent has gap $0$.
Now consider the case $a^\star\neq a_{i,\mathrm{init}}$.
Let $j^\star$ be the block containing $a^\star$.
When the algorithm tests $a^\star$ in block $j^\star$, we showed that the best arm is never deactivated, so it reaches level $k=i$.
At the end of that block, there are two cases.

If the block update succeeds, then the algorithm installs some arm $\tilde a_{i,j^\star}$ satisfying
\[
\widehat{\mu}_{i,j^\star,i}(\tilde a_{i,j^\star})\ge \widehat{\mu}_{i,j^\star,i}(a^\star),
\]
because $\tilde a_{i,j^\star}$ is chosen as the active arm with the largest final empirical mean.
Under $E$,
\[
\mu_{\tilde a_{i,j^\star}}
\ge \widehat{\mu}_{i,j^\star,i}(\tilde a_{i,j^\star})-c(t_i)
\ge \widehat{\mu}_{i,j^\star,i}(a^\star)-c(t_i)
\ge \mu^\star-2c(t_i),
\]
so the updated incumbent has gap at most $2c(t_i)$.

If the block update fails, then
\[
\widehat{\mu}_{i,j^\star,i}(\tilde a_{i,j^\star}) < \bar\mu_{i,j^\star} + 2c(t_i).
\]
Since $a^\star$ is still active at level $i$, maximality of $\tilde a_{i,j^\star}$ implies
\[
\widehat{\mu}_{i,j^\star,i}(a^\star)\le \widehat{\mu}_{i,j^\star,i}(\tilde a_{i,j^\star}) < \bar\mu_{i,j^\star} + 2c(t_i).
\]
Therefore, under $E$,
\[
\mu^\star-\mu_{\bar a_{i,j^\star}}
\le \bigl(\mu^\star-\widehat{\mu}_{i,j^\star,i}(a^\star)\bigr)
   + \bigl(\widehat{\mu}_{i,j^\star,i}(a^\star)-\bar\mu_{i,j^\star}\bigr)
   + \bigl(\bar\mu_{i,j^\star}-\mu_{\bar a_{i,j^\star}}\bigr)
< c(t_i)+2c(t_i)+c(t_i)
= 4c(t_i),
\]
and hence the incumbent present after block $j^\star$ has gap at most $4c(t_i)$.
Moreover, incumbent means do not decrease within the remaining blocks of outer iteration $i$, so the final incumbent at the end of outer iteration $i$ also satisfies
\begin{equation}
\label{eq:finite_arm_inc_gap}
\Delta_{a_i^\star} \triangleq \mu^\star-\mu_{a_i^\star} \le 4c(t_i).
\end{equation}

\paragraph{Survival implies small gap.}
Fix an outer iteration $i$, a block $C_{i,j}$, an arm $a\in C_{i,j}$, and a level $k\ge 2$.
If arm $a$ reaches level $k$ (i.e., it is not deactivated at level $k-1$), then
\[
\widehat{\mu}_{i,j,k-1}(a) \ge \bar\mu_{i,j} - 2c(t_{k-1}).
\]
Under $E$, we have
\[
\mu_{\bar a_{i,j}}-\mu_a \le 4c(t_{k-1}).
\]
where we used
\[
\mu_{\bar a_{i,j}}-\mu_a
\le \bigl(\mu_{\bar a_{i,j}}-\bar\mu_{i,j}\bigr)
   + \bigl(\bar\mu_{i,j}-\widehat{\mu}_{i,j,k-1}(a)\bigr)
   + \bigl(\widehat{\mu}_{i,j,k-1}(a)-\mu_a\bigr)
\le c(t_i)+2c(t_{k-1})+c(t_{k-1})
\le 4c(t_{k-1}),
\]
and the first term is bounded using Lemma~\ref{lem:finite_arm_current_incumbent_summary}.
The initial incumbent of outer iteration $i$ is $a_{i-1}^\star$, and by~\eqref{eq:finite_arm_inc_gap} applied to outer iteration $i-1$,
\[
\Delta_{a_{i-1}^\star} \le 4c(t_{i-1}) \le 4c(t_{k-1}).
\]
Since incumbent means do not decrease across block updates within an outer iteration, the block benchmark arm $\bar a_{i,j}$ cannot be worse than the initial incumbent $a_{i-1}^\star$, hence
\[
\Delta_{\bar a_{i,j}} \le 4c(t_{k-1}).
\]
Therefore,
\begin{equation}
\label{eq:finite_arm_survival_gap}
\Delta_a \le 8c(t_{k-1}).
\end{equation}

\paragraph{Regret decomposition.}
Let $k_i(a)$ denote the largest level at which arm $a$ itself is sampled in outer iteration $i$.
If $k_i(a)<i$, then the remaining slots $k=k_i(a)+1,\dots,i$ reserved for arm $a$ are filled by the frozen benchmark of its block.
This decomposition mirrors the predetermined batch layout.
The regret therefore has four contributions:
(i) incumbent-refresh blocks at the beginning of the outer iterations,
(ii) challenger pulls before deactivation or replacement,
(iii) benchmark-filled tail slots after a challenger is deactivated, and
(iv) the final exploitation batch.

\emph{(i) Incumbent-refresh blocks.}
At the start of outer iteration $i$, the algorithm pulls the current incumbent (which is $a_{i-1}^\star$) for $t_i$ rounds.
The $i=1$ term is at most $t_1$.
For $i\ge 2$, using~\eqref{eq:finite_arm_inc_gap} for iteration $i-1$ yields $\Delta_{a_{i-1}^\star}\le 4c(t_{i-1})$, hence
\[
\sum_{i=1}^{L} t_i\,\Delta_{a_{i-1}^\star}
\le t_1 + \sum_{i=2}^{L} t_i\cdot 4c(t_{i-1})
= O\!\left(L\sqrt{\tfrac{T}{K}\log T}\right),
\]
where we used $\tfrac{t_i}{\sqrt{t_{i-1}}}=\Theta\!\left(\sqrt{\tfrac{T}{10K}}\right)$ from the schedule.

\emph{(ii) Challenger slots before deactivation.}
The challenger pulls of arm $a$ in outer iteration $i$ occur exactly at levels $k=1,\dots,k_i(a)$ inside the unique block that contains $a$.
The $k=1$ terms contribute at most $O(KLt_1)$ and are dominated by the final bound.
Whenever arm $a$ is pulled at some level $k\ge 2$, it must have survived level $k-1$, and thus~\eqref{eq:finite_arm_survival_gap} applies.
Therefore the regret incurred by all challenger pulls at levels $k\ge 2$ is at most
\begin{align*}
\sum_{i=1}^{L}\sum_{a\in[K]}\sum_{k=2}^{k_i(a)} t_k\,\Delta_a
&\le \sum_{i=1}^{L}\sum_{a\in[K]}\sum_{k=2}^{k_i(a)} t_k\cdot 8c(t_{k-1})\\
&\le \sum_{i=1}^{L}\sum_{a\in[K]}\sum_{k=2}^{i} t_k\cdot 8c(t_{k-1})\\
&= O\!\left(\sum_{i=1}^{L}\sum_{a\in[K]}\sum_{k=2}^{i} \frac{t_k}{\sqrt{t_{k-1}}}\cdot \sqrt{\log T}\right)\\
&= O\!\left(L^2\sqrt{KT\log T}\right),
\end{align*}
using again $\tfrac{t_k}{\sqrt{t_{k-1}}}=\Theta\!\left(\sqrt{\tfrac{T}{10K}}\right)$.

\emph{(iii) Benchmark-filled tail slots.}
Suppose challenger $a$ is deactivated during outer iteration $i$ inside block $C_{i,j}$.
Then the remaining slots reserved for this arm are filled by the frozen benchmark arm $\bar a_{i,j}$ of that block.
By the monotonicity established above, every block benchmark appearing during outer iteration $i$ is at least as good as the initial incumbent $a_{i-1}^\star$.
Hence, for $i\ge 2$, each such filler slot incurs regret at most $4c(t_{i-1})$ by~\eqref{eq:finite_arm_inc_gap}.
Therefore the total regret of these benchmark-filled tail slots is at most
\begin{align*}
\sum_{i=2}^{L}\sum_{a\in[K]}\sum_{k=k_i(a)+1}^{i} t_k \cdot \Delta_{\mathrm{inc},i,a,k}
&\le \sum_{i=2}^{L}(K-1)\Bigl(\sum_{k=1}^{i} t_k\Bigr)\cdot 4c(t_{i-1})\\
&\le 4(1+\sqrt{2})(K-1)\sum_{i=2}^{L} t_i c(t_{i-1})\\
&= O\!\left(L\sqrt{KT\log T}\right),
\end{align*}
where $\Delta_{\mathrm{inc},i,a,k}$ denotes the gap of the benchmark arm used to fill that slot.
Thus replacing a deactivated challenger by the block benchmark does not change the order of the regret bound.

\emph{(iv) Final exploitation.}
After the main outer-loop schedule $i=1,\dots,L$ terminates, the algorithm commits to the final incumbent $a_L^\star$ for the remaining
$T-N_{\mathrm{main}}$ rounds.
By Lemma~\ref{lem:finite_arm_schedule}, this final batch length is deterministic, and by the final incumbent gap bound together with $t_L\asymp \tfrac{T}{10K}$, we have
\[
\bigl(T-N_{\mathrm{main}}\bigr)\Delta_{a_L^\star}
\le T\cdot 4c(t_L)
= O\!\left(\sqrt{KT\log T}\right).
\]

Combining (i)--(iv) and using $L=O(\log\log T)$ completes the proof under $E$.
Taking expectation over $E$ and its complement (whose probability is at most $1/T^2$) yields the stated bound.
Therefore the explicit batched formulation attains the same upper bound.
\end{proof}

\section{Conclusion}
\label{sec:conclusion}

We identify a qualitative transition in stochastic bandits under joint constraints on memory and adaptivity.
When either resource is available in abundance, near-minimax regret is attainable at surprisingly low cost:
logarithmic memory suffices under fully sequential interaction, and a $K$-independent number of batches suffices when memory is unrestricted.
Our results show that this benign picture breaks down when the two constraints are imposed simultaneously.
In the large-horizon regime, near-minimax regret requires
\[
B=\Omega(K/W),
\]
even under adaptive grids, and our static-grid upper bound shows that this dependence on $K/W$ is achievable up to polylogarithmic factors.
Thus the batch complexity of near-minimax learning is governed by the memory--adaptivity trade-off captured by $K/W$.

Beyond the specific lower bound, the main conceptual message is that statistical performance in interactive learning is limited not only by how much information is acquired, but also by how efficiently that information can be stored and transported across rounds of adaptivity.
Our proof makes this precise through an information-bottleneck viewpoint:
low regret forces the learner to extract $\Omega(K)$ bits about the hidden good-arm set, while a $B$-batch protocol with a $W$-bit memory budget can preserve only $O(BW)$ bits of instance-dependent information across batch boundaries.
In this sense, the relevant resource trade-off is not merely exploration versus exploitation, but information acquisition versus information retention.

On the technical side, the lower bound combines a coarse sampling profile, a per-coordinate information argument, and an event-restricted change-of-measure lemma tailored to under-sampling events.
The last ingredient may be useful beyond the present proof, as it isolates a clean way to compare adaptive-sampling processes while paying only for the first $n$ pulls relevant to the event under consideration.

Several directions remain open.
The most immediate algorithmic question is to sharpen the remaining polylogarithmic gap in the $K/W$ trade-off.
It would also be interesting to understand whether the same memory--adaptivity bottleneck persists in richer models such as linear, generalized linear, and contextual bandits, where the latent instance information is structured rather than arm-wise independent.







\appendix
\section{Deferred Measurability Proofs}

\subsection{Localized change of measure: formal setup and proofs}
\label{app:localized_com}

This subsection collects the measurable-space formulation and the truncation-based proofs
used in Section~\ref{subsec:necessary_exploration}.

\paragraph{Formal setup.}
Fix a measurable reward space $(\mathcal Y,\mathcal S)$.
Work on a common measurable space $(\Omega,\mathcal F)$ supporting a random seed $U\sim\lambda$ and an array
of reward variables $(X_{i,\ell})_{i\in[K],\,\ell\ge1}$ taking values in $\mathcal Y$.
Let $\P_0:=\P_{\nu_0}$ and $\P_1:=\P_{\nu_1}$ denote two probability measures on $(\Omega,\mathcal F)$ such that:
\begin{itemize}
\item $U$ is independent of all reward variables under both $\P_0$ and $\P_1$;
\item the collection $(X_{i,\ell})_{i\in[K],\,\ell\ge1}$ is mutually independent under both $\P_0$ and $\P_1$;
\item for every $i\neq j$, the stream $(X_{i,\ell})_{\ell\ge1}$ has the same i.i.d.\ law under $\P_0$ and $\P_1$;
\item for arm $j$, the stream $(X_{j,\ell})_{\ell\ge1}$ is i.i.d.\ with law $P$ under $\P_0$ and i.i.d.\ with law $Q$ under $\P_1$.
\end{itemize}
Write $\E_0,\E_1$ for the corresponding expectations.

The algorithm is a non-anticipating policy:
for each $t\ge1$,
\[
A_t=\rho_t\bigl(U,A_1,R_1,\dots,A_{t-1},R_{t-1}\bigr)\in[K],
\]
for some measurable map $\rho_t$.
The observed reward is
\[
R_t:=X_{A_t,\,N_{A_t}(t)},
\qquad
N_i(t):=\sum_{s=1}^t \ind\{A_s=i\}.
\]
Let \((\mathcal F_t)_{t=0}^T\) denote the natural filtration generated by the interaction transcript,
\[
\mathcal F_t=\sigma(U,A_1,R_1,\dots,A_t,R_t).
\]
For each integer $n\ge0$, define the prefix sigma-field
\[
\mathcal G_n
:=
\sigma\!\Bigl(U,\ (X_{i,\ell})_{i\neq j,\ \ell\ge1},\ X_{j,1},\dots,X_{j,n}\Bigr).
\]

\begin{lemma}[Budget-event measurability]
\label{lem:budget_event_measurable}
For every event $E\in\mathcal F_T$ and every integer $n\ge0$,
\[
E\cap\{N_j(T)\le n\}\in\mathcal G_n.
\]
\end{lemma}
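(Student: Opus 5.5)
The plan is to build an explicit ``shadow'' process that is $\mathcal G_n$-measurable by construction and agrees with the real transcript on the budget event. Define $\tilde X_{i,\ell}:=X_{i,\ell}$ for $i\neq j$ and all $\ell$. For arm $j$, set $\tilde X_{j,\ell}:=X_{j,\ell}$ for $\ell\le n$ and $\tilde X_{j,\ell}:=y_0$ for $\ell>n$, where $y_0\in\mathcal Y$ is a fixed point. Run the same policy maps $\rho_t$ with seed $U$ on the array $\tilde X$. This produces $\tilde A_t=\rho_t(U,\tilde A_1,\tilde R_1,\dots,\tilde A_{t-1},\tilde R_{t-1})$, $\tilde R_t:=\tilde X_{\tilde A_t,\tilde N_{\tilde A_t}(t)}$ and counts $\tilde N_i(t)$. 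Since $(U,\tilde X)$ is $\mathcal G_n$-measurable, induction on $t$ shows that every $(\tilde A_t,\tilde R_t)$ is $\mathcal G_n$-measurable. At each step we compose measurable maps; the index $\tilde N_{\tilde A_t}(t)$ takes finitely many values, so $\tilde R_t=\sum_{i,\ell}\ind\{\tilde A_t=i,\tilde N_i(t)=\ell\}\tilde X_{i,\ell}$ is measurable.

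The key step is a coupling claim: on $\{N_j(T)\le n\}$ we have $(A_t,R_t)=(\tilde A_t,\tilde R_t)$ for all $t\le T$. Conversely, on $\{\tilde N_j(T)\le n\}$ the same equality holds. I would prove both by induction on $t$. Suppose the transcripts agree up to $t-1$. Then $A_t=\tilde A_t$, because the same $\rho_t$ is applied to the same inputs, and the counts agree up to time $t$. If $A_t\neq j$, the rewards agree because the streams coincide. If $A_t=j$, then $N_j(t)\le N_j(T)\le n$, so $X_{j,N_j(t)}=\tilde X_{j,N_j(t)}$. It follows that $\{N_j(T)\le n\}=\{\tilde N_j(T)\le n\}$. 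Indeed, on either event the transcripts coincide, so $N_j(T)=\tilde N_j(T)$. The right-hand event is $\mathcal G_n$-measurable.

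Now take $E\in\mathcal F_T$. By Doob's functional representation, $\ind_E=f(U,A_1,R_1,\dots,A_T,R_T)$ for some measurable $f$. On $\{N_j(T)\le n\}$ this equals $f(U,\tilde A_1,\tilde R_1,\dots,\tilde A_T,\tilde R_T)$. Hence
\[
E\cap\{N_j(T)\le n\}=\{f(U,\tilde A_{1:T},\tilde R_{1:T})=1\}\cap\{\tilde N_j(T)\le n\}\in\mathcal G_n .
\]

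The main obstacle is the two-sided coupling induction. It requires a little care to establish both inclusions, so that the budget event itself is $\mathcal G_n$-measurable and not only its intersection with the shadow event. The measurability of the random-index reward is the only other technical point, and the finite-sum representation above handles it. If $\mathcal Y$ could be empty, we would not have a $y_0$, but that case is vacuous.
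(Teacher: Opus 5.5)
Your proposal is correct and follows essentially the same route as the paper. It truncates arm $j$'s stream after $n$ entries with a fixed filler point and runs the same policy to obtain a $\mathcal G_n$-measurable shadow transcript. It then shows that the original and shadow transcripts coincide on the budget event, so that $\{N_j(T)\le n\}=\{\tilde N_j(T)\le n\}$, and pulls back $E=\{H_T\in\Gamma_E\}$ through the shadow transcript.

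The only difference is cosmetic: the paper packages the coupling as the equality of the $(n+1)$-st pull times $\tau_j^{(n)}=\widetilde\tau_j^{(n)}$, whereas you run the induction separately on each of the two budget events. Also, your ``finite sum'' for $\tilde R_t$ should be read as a piecewise definition on a finite measurable partition, since $\mathcal Y$ need not have linear structure.
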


\begin{proof}
Fix $n\ge0$ and choose an arbitrary point $y_\star\in\mathcal Y$.
Define truncated reward streams by
\[
\widetilde X_{i,\ell}^{(n)}
:=
\begin{cases}
X_{i,\ell}, & i\neq j,\\
X_{j,\ell}, & i=j,\ \ell\le n,\\
y_\star, & i=j,\ \ell>n.
\end{cases}
\]
Let $\bigl(\widetilde A_t^{(n)},\widetilde R_t^{(n)}\bigr)_{t=1}^T$ be the transcript obtained by running the
same policy on the truncated array, namely
\[
\widetilde A_t^{(n)}
=
\rho_t\!\bigl(U,\widetilde A_1^{(n)},\widetilde R_1^{(n)},\dots,
\widetilde A_{t-1}^{(n)},\widetilde R_{t-1}^{(n)}\bigr),
\]
and
\[
\widetilde R_t^{(n)}
=
\widetilde X^{(n)}_{\widetilde A_t^{(n)},\,\widetilde N_{\widetilde A_t^{(n)}}^{(n)}(t)},
\qquad
\widetilde N_i^{(n)}(t):=\sum_{s=1}^t \ind\{\widetilde A_s^{(n)}=i\}.
\]
By construction, the truncated transcript
\[
\widetilde H_T^{(n)}
:=
\bigl(U,\widetilde A_1^{(n)},\widetilde R_1^{(n)},\dots,\widetilde A_T^{(n)},\widetilde R_T^{(n)}\bigr)
\]
and the count $\widetilde N_j^{(n)}(T)$ are $\mathcal G_n$-measurable.

Now define the first times at which arm $j$ is pulled for the $(n+1)$-st time:
\[
\tau_j^{(n)}:=\inf\{t\ge1:N_j(t)=n+1\},
\qquad
\widetilde\tau_j^{(n)}:=\inf\{t\ge1:\widetilde N_j^{(n)}(t)=n+1\},
\]
with the convention $\inf\emptyset=\infty$.
We claim that $\tau_j^{(n)}=\widetilde\tau_j^{(n)}$.

Indeed, we prove by induction on $t$ that the original and truncated transcripts coincide up to every time
$t<\tau_j^{(n)}\wedge\widetilde\tau_j^{(n)}$.
The base case $t=0$ is trivial.
Assume the claim holds up to time $t-1$ for some $t<\tau_j^{(n)}\wedge\widetilde\tau_j^{(n)}$.
Then the histories up to time $t-1$ are identical, so the actions at time $t$ are identical because the
policy is non-anticipating and uses the same seed $U$.
If the common action is some $i\neq j$, then the observed rewards coincide because the two arrays agree on arm $i$.
If the common action is $j$, then, since $t<\tau_j^{(n)}\wedge\widetilde\tau_j^{(n)}$, both processes still have at most $n$
pulls of arm $j$ after round $t$, so the reward index used at time $t$ is at most $n$ in both processes,
and hence the observed rewards also coincide because the two arrays agree on
$X_{j,1},\dots,X_{j,n}$.
Thus the transcripts coincide up to time $t$.

If $\tau_j^{(n)}<\widetilde\tau_j^{(n)}$, then the transcripts coincide up to time $\tau_j^{(n)}-1$, so the actions at time $\tau_j^{(n)}$
also coincide.
Since $N_j(\tau_j^{(n)})=n+1$, we must have $A_{\tau_j^{(n)}}=j$, hence also $\widetilde A_{\tau_j^{(n)}}^{(n)}=j$, which forces
$\widetilde\tau_j^{(n)}\le\tau_j^{(n)}$, a contradiction.
The case $\widetilde\tau_j^{(n)}<\tau_j^{(n)}$ is identical.
Therefore $\tau_j^{(n)}=\widetilde\tau_j^{(n)}$, and hence
\[
\{N_j(T)\le n\}
=
\{\widetilde N_j^{(n)}(T)\le n\}
\in\mathcal G_n.
\]
Moreover, on this event the original and truncated full transcripts coincide up to time $T$.

Finally, let
\[
H_T := (U,A_1,R_1,\dots,A_T,R_T).
\]
Since $E\in\mathcal F_T=\sigma(H_T)$, there exists a measurable set $\Gamma_E$ in the transcript space such that
\[
E=\{H_T\in \Gamma_E\}.
\]
Using the coincidence of the original and truncated transcripts on $\{N_j(T)\le n\}$, we obtain
\[
E\cap\{N_j(T)\le n\}
=
\{\widetilde H_T^{(n)}\in \Gamma_E\}\cap\{\widetilde N_j^{(n)}(T)\le n\},
\]
and the right-hand side is $\mathcal G_n$-measurable.
\end{proof}

\subsection{Formal proof that the boundary transcript determines the sampling profile}
\label{app:boundary_transcript_measurability}

\begin{proof}[Proof of Lemma~\ref{lem:Y_function_of_U_M}]
For each batch $b\in[B]$ and arm $i\in[K]$, define the batch-level pull count
\[
N_i^{(b)} := \sum_{t=t_{b-1}+1}^{t_b} \ind\{A_t=i\},
\]
and let
\[
N^{(b)} := (N_i^{(b)})_{i\in[K]}.
\]
By the policy-commitment constraint, conditional on $(t_{b-1},M_{t_{b-1}},U)$ the entire action plan for batch $b$ is fixed.
Since $t_{b-1}$ is one coordinate of the internal grid $\tau$, where $\tau:=(t_1,\dots,t_{B-1})$ denotes the realized internal grid, the vector
$N^{(b)}$ is $\sigma(M_{t_{b-1}},U,\tau)$-measurable.
Hence there exists a measurable map $c_b$ such that
\[
N^{(b)} = c_b(M_{t_{b-1}},U,\tau)\qquad\text{a.s.}
\]

Summing over batches yields $N_i(T)=\sum_{b=1}^B N_i^{(b)}$, and therefore
\[
Y_i
:=
\ind\Bigl\{\sum_{b=1}^B c_{b,i}(M_{t_{b-1}},U,\tau)\ge n\Bigr\},
\qquad i\in[K].
\]
Since $M_0$ is fixed, this proves that $Y$ is a deterministic function of
\[
(U,\tau,M_{t_1},\dots,M_{t_{B-1}}).
\]

In the static-grid model, $\tau$ is fixed before interaction and therefore is a deterministic function of $U$ (or simply deterministic if the grid is nonrandom).
In the adaptive-grid model, we have
\[
t_1=\psi_1(0,M_0,U),
\qquad
t_b=\psi_b(t_{b-1},M_{t_{b-1}},U),\quad b=2,\dots,B-1,
\]
so $\tau$ is recursively determined by $(U,M_{t_1},\dots,M_{t_{B-1}})$.
Hence in either model $Y$ is a deterministic function of
\[
(U,M_{t_1},\dots,M_{t_{B-1}}),
\]
as claimed. Even in the adaptive-grid model, the realized grid introduces no additional instance-dependent information beyond the seed and the boundary-memory transcript.
\end{proof}

\bibliographystyle{alpha}
\bibliography{refs}

@article{perchet2016batched,
  title={Batched bandit problems},
  author={Perchet, Vianney and Rigollet, Philippe and Chassang, Sylvain and Snowberg, Erik},
  journal={The Annals of Statistics},
  pages={660--681},
  year={2016},
  publisher={JSTOR}
}

@article{gao2019batched,
  title={Batched multi-armed bandits problem},
  author={Gao, Zijun and Han, Yanjun and Ren, Zhimei and Zhou, Zhengqing},
  journal={Advances in Neural Information Processing Systems},
  volume={32},
  year={2019}
}

@inproceedings{esfandiari2021regret,
  title={Regret bounds for batched bandits},
  author={Esfandiari, Hossein and Karbasi, Amin and Mehrabian, Abbas and Mirrokni, Vahab},
  booktitle={Proceedings of the AAAI Conference on Artificial Intelligence},
  volume={35},
  number={8},
  pages={7340--7348},
  year={2021}
}

@inproceedings{jin2021anytimebatched,
  title={Almost optimal anytime algorithm for batched multi-armed bandits},
  author={Jin, Tianyuan and Tang, Jing and Xu, Pan and Huang, Keke and Xiao, Xiaokui and Gu, Quanquan},
  booktitle={International Conference on Machine Learning},
  pages={5065--5073},
  year={2021},
  organization={PMLR}
}

@article{kalkanli2021batchedthompson,
  title={Batched thompson sampling},
  author={Kalkanli, Cem and Ozgur, Ayfer},
  journal={Advances in Neural Information Processing Systems},
  volume={34},
  pages={29984--29994},
  year={2021}
}

@article{cesabianchi2013switchingcosts,
  title={Online learning with switching costs and other adaptive adversaries},
  author={Cesa-Bianchi, Nicolo and Dekel, Ofer and Shamir, Ohad},
  journal={Advances in Neural Information Processing Systems},
  volume={26},
  year={2013}
}

@inproceedings{joulani2013delayed,
  title={Online learning under delayed feedback},
  author={Joulani, Pooria and Gyorgy, Andras and Szepesv{\'a}ri, Csaba},
  booktitle={International conference on machine learning},
  pages={1453--1461},
  year={2013},
  organization={PMLR}
}

@inproceedings{vernade2020delayedlinear,
  title={Linear bandits with stochastic delayed feedback},
  author={Vernade, Claire and Carpentier, Alexandra and Lattimore, Tor and Zappella, Giovanni and Ermis, Beyza and Brueckner, Michael},
  booktitle={International Conference on Machine Learning},
  pages={9712--9721},
  year={2020},
  organization={PMLR}
}

@inproceedings{agarwal2017limitedadaptivity,
  title={Learning with limited rounds of adaptivity: Coin tossing, multi-armed bandits, and ranking from pairwise comparisons},
  author={Agarwal, Arpit and Agarwal, Shivani and Assadi, Sepehr and Khanna, Sanjeev},
  booktitle={Conference on Learning Theory},
  pages={39--75},
  year={2017},
  organization={PMLR}
}

@inproceedings{jun2016toparmbatch,
  title={Top arm identification in multi-armed bandits with batch arm pulls},
  author={Jun, Kwang-Sung and Jamieson, Kevin and Nowak, Robert and Zhu, Xiaojin},
  booktitle={Artificial Intelligence and Statistics},
  pages={139--148},
  year={2016},
  organization={PMLR}
}

@inproceedings{chaudhuri2020boundedarmmemory,
  title={Regret minimisation in multi-armed bandits using bounded arm memory},
  author={Chaudhuri, Arghya Roy and Kalyanakrishnan, Shivaram},
  booktitle={Proceedings of the AAAI Conference on Artificial Intelligence},
  volume={34},
  number={06},
  pages={10085--10092},
  year={2020}
}

@article{maiti2021boundedarmmemory,
  title={Multi-armed bandits with bounded arm-memory: Near-optimal guarantees for best-arm identification and regret minimization},
  author={Maiti, Arnab and Patil, Vishakha and Khan, Arindam},
  journal={Advances in Neural Information Processing Systems},
  volume={34},
  pages={19553--19565},
  year={2021}
}

@inproceedings{liau2018constantspace,
  title={Stochastic multi-armed bandits in constant space},
  author={Liau, David and Song, Zhao and Price, Eric and Yang, Ger},
  booktitle={International Conference on Artificial Intelligence and Statistics},
  pages={386--394},
  year={2018},
  organization={PMLR}
}

@article{xu2021memoryconstrained,
  title={Memory-constrained no-regret learning in adversarial multi-armed bandits},
  author={Xu, Xiao and Zhao, Qing},
  journal={IEEE Transactions on Signal Processing},
  volume={69},
  pages={2371--2382},
  year={2021},
  publisher={IEEE}
}

@inproceedings{baudry2021limitedmemory,
  title={On limited-memory subsampling strategies for bandits},
  author={Baudry, Dorian and Russac, Yoan and Capp{\'e}, Olivier},
  booktitle={International Conference on Machine Learning},
  pages={727--737},
  year={2021},
  organization={PMLR}
}

@inproceedings{assadi2020exploration,
  title={Exploration with limited memory: streaming algorithms for coin tossing, noisy comparisons, and multi-armed bandits},
  author={Assadi, Sepehr and Wang, Chen},
  booktitle={Proceedings of the 52nd Annual ACM SIGACT Symposium on theory of computing},
  pages={1237--1250},
  year={2020}
}

@inproceedings{jin2021optimalstreaming,
  title={Optimal streaming algorithms for multi-armed bandits},
  author={Jin, Tianyuan and Huang, Keke and Tang, Jing and Xiao, Xiaokui},
  booktitle={International Conference on Machine Learning},
  pages={5045--5054},
  year={2021},
  organization={PMLR}
}

@inproceedings{agarwal2022sharp,
  title={A sharp memory-regret trade-off for multi-pass streaming bandits},
  author={Agarwal, Arpit and Khanna, Sanjeev and Patil, Prathamesh},
  booktitle={Conference on Learning Theory},
  pages={1423--1462},
  year={2022},
  organization={PMLR}
}

@article{li2023tightmemoryregret,
  title={Tight memory-regret lower bounds for streaming bandits},
  author={Li, Shaoang and Zhang, Lan and Wang, Junhao and Li, Xiang-Yang},
  journal={arXiv preprint arXiv:2306.07903},
  year={2023}
}

@article{assadi2022singlepasslowerbounds,
  title={Single-pass streaming lower bounds for multi-armed bandits exploration with instance-sensitive sample complexity},
  author={Assadi, Sepehr and Wang, Chen},
  journal={Advances in Neural Information Processing Systems},
  volume={35},
  pages={33066--33079},
  year={2022}
}

@inproceedings{wang2023singlepass,
  title={Tight regret bounds for single-pass streaming multi-armed bandits},
  author={Wang, Chen},
  booktitle={International Conference on Machine Learning},
  pages={35525--35547},
  year={2023},
  organization={PMLR}
}

@book{lattimore2020banditalgorithms,
  title={Bandit algorithms},
  author={Lattimore, Tor and Szepesv{\'a}ri, Csaba},
  year={2020},
  publisher={Cambridge University Press}
}

@article{auer2002finite,
  title={Finite-time analysis of the multiarmed bandit problem},
  author={Auer, Peter and Cesa-Bianchi, Nicolo and Fischer, Paul},
  journal={Machine learning},
  volume={47},
  number={2},
  pages={235--256},
  year={2002},
  publisher={Springer}
}

@article{thompson1933likelihood,
  title={On the likelihood that one unknown probability exceeds another in view of the evidence of two samples},
  author={Thompson, William R},
  journal={Biometrika},
  volume={25},
  number={3/4},
  pages={285--294},
  year={1933},
  publisher={JSTOR}
}

@inproceedings{agrawal2012analysis,
  title={Analysis of thompson sampling for the multi-armed bandit problem},
  author={Agrawal, Shipra and Goyal, Navin},
  booktitle={Conference on learning theory},
  pages={39--1},
  year={2012},
  organization={JMLR Workshop and Conference Proceedings}
}

@article{cover2003two,
  title={The two-armed-bandit problem with time-invariant finite memory},
  author={Cover, T and Hellman, M},
  journal={IEEE Transactions on Information Theory},
  volume={16},
  number={2},
  pages={185--195},
  year={2003},
  publisher={IEEE}
}

@article{kaufmann2016complexity,
  title={On the complexity of best-arm identification in multi-armed bandit models},
  author={Kaufmann, Emilie and Capp{\'e}, Olivier and Garivier, Aur{\'e}lien},
  journal={The Journal of Machine Learning Research},
  volume={17},
  number={1},
  pages={1--42},
  year={2016},
  publisher={JMLR. org}
}

@inproceedings{garivier2016optimal,
  title={Optimal best arm identification with fixed confidence},
  author={Garivier, Aur{\'e}lien and Kaufmann, Emilie},
  booktitle={Conference on Learning Theory},
  pages={998--1027},
  year={2016},
  organization={PMLR}
}

@article{bretagnolle1979estimation,
  title={Estimation des densit{\'e}s: risque minimax},
  author={Bretagnolle, Jean and Huber, Catherine},
  journal={Zeitschrift f{\"u}r Wahrscheinlichkeitstheorie und verwandte Gebiete},
  volume={47},
  number={2},
  pages={119--137},
  year={1979},
  publisher={Springer}
}

@incollection{tsybakov2008nonparametric,
  title={Nonparametric estimators},
  author={Tsybakov, Alexandre B},
  booktitle={Introduction to Nonparametric Estimation},
  pages={1--76},
  year={2008},
  publisher={Springer}
}

@article{arora2012policyregret,
  title={Online bandit learning against an adaptive adversary: from regret to policy regret},
  author={Arora, Raman and Dekel, Ofer and Tewari, Ambuj},
  journal={arXiv preprint arXiv:1206.6400},
  year={2012}
}

@article{cover1968note,
  title={A note on the two-armed bandit problem with finite memory},
  author={Cover, Thomas M},
  journal={Information and Control},
  volume={12},
  number={5},
  pages={371--377},
  year={1968},
  publisher={Elsevier}
}

@article{han2020sequentialbatch,
  title={Sequential batch learning in finite-action linear contextual bandits},
  author={Han, Yanjun and Zhou, Zhengqing and Zhou, Zhengyuan and Blanchet, Jose and Glynn, Peter W and Ye, Yinyu},
  journal={arXiv preprint arXiv:2004.06321},
  year={2020}
}

@inproceedings{ruan2021limitedadaptivity,
  title={Linear bandits with limited adaptivity and learning distributional optimal design},
  author={Ruan, Yufei and Yang, Jiaqi and Zhou, Yuan},
  booktitle={Proceedings of the 53rd Annual ACM SIGACT Symposium on Theory of Computing},
  pages={74--87},
  year={2021}
}

@article{ren2024dynamicbatch,
  title={Dynamic batch learning in high-dimensional sparse linear contextual bandits},
  author={Ren, Zhimei and Zhou, Zhengyuan},
  journal={Management Science},
  volume={70},
  number={2},
  pages={1315--1342},
  year={2024},
  publisher={INFORMS}
}

@article{ren2020batchedglm,
  title={Batched learning in generalized linear contextual bandits with general decision sets},
  author={Ren, Zhimei and Zhou, Zhengyuan and Kalagnanam, Jayant R},
  journal={IEEE Control Systems Letters},
  volume={6},
  pages={37--42},
  year={2020},
  publisher={IEEE}
}

@article{sawarni2024limitedadaptivity,
  title={Generalized linear bandits with limited adaptivity},
  author={Sawarni, Ayush and Das, Nirjhar and Barman, Siddharth and Sinha, Gaurav},
  journal={Advances in Neural Information Processing Systems},
  volume={37},
  pages={8329--8369},
  year={2024}
}

@article{hanna2023efficient,
  title={Efficient batched algorithm for contextual linear bandits with large action space via soft elimination},
  author={Hanna, Osama and Yang, Lin and Fragouli, Christina},
  journal={Advances in Neural Information Processing Systems},
  volume={36},
  pages={56772--56783},
  year={2023}
}

@article{ren2024optimalbatchedlinear,
  title={Optimal batched linear bandits},
  author={Ren, Xuanfei and Jin, Tianyuan and Xu, Pan},
  journal={arXiv preprint arXiv:2406.04137},
  year={2024}
}

@inproceedings{assadi2024bestarmevades,
  title={The best arm evades: Near-optimal multi-pass streaming lower bounds for pure exploration in multi-armed bandits},
  author={Assadi, Sepehr and Wang, Chen},
  booktitle={The Thirty Seventh Annual Conference on Learning Theory},
  pages={311--358},
  year={2024},
  organization={PMLR}
}

@article{Raz16space,
  author       = {Ran Raz},
  title        = {Fast Learning Requires Good Memory: {A} Time-Space Lower Bound for
                  Parity Learning},
  journal      = {CoRR},
  volume       = {abs/1602.05161},
  year         = {2016},
  url          = {http://arxiv.org/abs/1602.05161},
  eprinttype   = {arXiv},
  eprint       = {1602.05161},
  bibsource    = {dblp computer science bibliography, https://dblp.org}
}

@article{Raz17space,
  author       = {Sumegha Garg and
                  Ran Raz and
                  Avishay Tal},
  title        = {Extractor-Based Time-Space Lower Bounds for Learning},
  journal      = {CoRR},
  volume       = {abs/1708.02639},
  year         = {2017},
  url          = {http://arxiv.org/abs/1708.02639},
  eprinttype   = {arXiv},
  eprint       = {1708.02639},
  bibsource    = {dblp computer science bibliography, https://dblp.org}
}

@InProceedings{Moshkovitz17space,
  title = {Mixing Implies Lower Bounds for Space Bounded Learning},
  author = {Moshkovitz, Dana and Moshkovitz, Michal},
  booktitle = {Proceedings of the 2017 Conference on Learning Theory},
  pages = {1516--1566},
  year = {2017},
  editor = {Kale, Satyen and Shamir, Ohad},
  volume = {65},
  series = {Proceedings of Machine Learning Research},
  month = {07--10 Jul},
  publisher = {PMLR},
  url = {https://proceedings.mlr.press/v65/moshkovitz17a.html}
}

@article{Moshkovitz17generalspace,
  author       = {Michal Moshkovitz and
                  Naftali Tishby},
  title        = {A General Memory-Bounded Learning Algorithm},
  journal      = {CoRR},
  volume       = {abs/1712.03524},
  year         = {2017},
  url          = {http://arxiv.org/abs/1712.03524},
  eprinttype   = {arXiv},
  eprint       = {1712.03524},
  bibsource    = {dblp computer science bibliography, https://dblp.org}
}

\end{document}